\documentclass[letterpaper, 10 pt, conference]{ieeeconf}  

\usepackage{wrapfig}
\usepackage{tikz}                                                          
\usepackage{graphicx}
\usepackage{amssymb}
\usepackage{epstopdf}
\usepackage{graphics}
\usepackage{amsmath, amssymb}
\numberwithin{equation}{section}
\usepackage{graphicx}   
\usepackage{verbatim}   
\usepackage{color}      
\usepackage{subfig}  
\usepackage{hyperref}   
\usepackage{xspace}
\usepackage{float}
\usepackage[font={footnotesize}]{caption}
\usepackage{csquotes}
\DeclareGraphicsRule{.tif}{png}{.png}{`convert #1 `dirname #1`/`basename #1 .tif`.png}

\newtheorem{defi}{Definition}
\newtheorem{theorem}{Theorem}[section]
\newtheorem{lemma}[theorem]{Lemma}

\newtheorem{coro}[theorem]{Corollary}

\DeclareMathOperator*{\argmax}{arg\,max}
\DeclareMathOperator*{\argmin}{arg\,min}

\newcommand{\bs}[1]{\boldsymbol #1}


\newcommand{\sigp}{\textrm{sIGP}}
\newcommand{\bfx}{\mathbf{x}}


\newcommand{\bLambda}{\bs\Lambda}
\newcommand{\balpha}{\bs\alpha}
\newcommand{\bff}{\mathbf{f}}
\newcommand{\bfr}{\bff^{R}}

\newcommand{\bfh}{\mathbf{h}}

\newcommand{\bfi}{\bff^i}

\newcommand{\bfz}{\mathbf{z}}
\newcommand{\bfzt}{\mathbf{z}_{1:t}}
\newcommand{\bfzr}{\bfzt^R}
\newcommand{\bfzf}{\bfzt^{\bff}}
\newcommand{\bfzeye}{\bfzt^{\bff^i}}


\newcommand{\bmu}{\bs\mu}




\newcommand{\calX}{\mathcal X}
\newcommand{\calN}{\mathcal N}

\newcommand{\pf}{p(\bff\mid\bfzf)}
\newcommand{\probot}{p(\bfr\mid\bfzr)}

\newcommand{\peye}{p(\bfi\mid\bfzeye)}


\newcommand{\pigpshort}{p(\bfr,\bff\mid \bfzr,\bfzf)}
\newcommand{\pigpshortz}{p(\bfr,\bff\mid \bfzt)}

\newcommand{\pigpeyez}{p(\bfr,\bfi\mid \bfzt)}


\newcommand{\dieye}{\delta(\bfr,\bfi)}
\newcommand{\bdi}{\bar\delta(\bfr,\bff)}
\newcommand{\bdieye}{\bar\delta(\bfr,\bff^i)}
\newcommand{\bdieyelk}{\bar\delta(\bfr_\ell,\bff_k^i)}
\newcommand{\interaction}{\psi(\bfr,\bff)}
\newcommand{\interactioneye}{\psi(\bfr,\bff^i)}
\newcommand{\pnot}{\mathbf{P}_{\neg\kappa}}


\newcommand{\neye}{N^{\bff^i}_t}
\newcommand{\nf}{N^{\bff}_t}
\newcommand{\nr}{N^R_t}

\newcommand{\nsigp}{N_{sIGP}}
\newcommand{\nt}{n_t}

\newcommand{\bmur}{\bmu^R}
\newcommand{\bSigmar}{\bSigma^R}
\newcommand{\bmurl}{\bmu^R_\ell}
\newcommand{\bSigmarl}{\bSigma^R_\ell}

\newcommand{\bmufi}{\bmu^{\bff^i}}
\newcommand{\bSigmafi}{\bSigma^{\bff^i}}
\newcommand{\bmufik}{\bmu^{\bff^i}_k}
\newcommand{\bmufiki}{\bmu^{\bff^i}_{k_i}}
\newcommand{\bSigmafik}{\bSigma^{\bfi}_k}
\newcommand{\bSigmafiki}{\bSigma^{\bfi}_{k_i}}

\newcommand{\bSigma}{\bs\Sigma}


\newcommand{\weye}{w^{\bfi}_k}
\newcommand{\weyei}{w^{\bfi}_{k_i}}

\newcommand{\wrobot}{w^R_{\ell}}

\newcommand{\Z}{Z^{-1}_{\ell,k}}
\newcommand{\znot}{\wlk(1-\Z/\wlk)}

\newcommand{\Lambdalk}{\bLambda_{\ell,k}}

\newcommand{\wlk}{w_{\ell,k}}


\newcommand{\sumeye}{\sum_{k=1}^{\neye}}
\newcommand{\sumeyei}{\sum_{k_i=1}^{\neye}}

\newcommand{\sumfi}{\sum_{i=1}^{\nf}}
\newcommand{\sumr}{\sum_{\ell=1}^{\nr}}


\newcommand{\normalrobotshort}{\calN_{\bfr,\ell}}

\newcommand{\normalrobotl}{\calN(\bfr\mid\bmurl,\bSigmarl)}

\newcommand{\normaleyeshort}{\calN_{\bfi,k}}
\newcommand{\normaleyeshorti}{\calN_{\bfi,k_i}}

\newcommand{\normaleyek}{\calN(\bfi\mid\bmufik,\bSigmafik)}


\newcommand{\eyesum}{\sumeye\weye\normaleyek}
\newcommand{\eyesumshort}{\sumeyei\weyei\normaleyeshorti}

\newcommand{\robotsum}{\sumr\wrobot\normalrobotl}
\newcommand{\robotsumshort}{\sumr\wrobot\normalrobotshort}

\begin{document}

\title{\large{\textbf{Sparse Interacting Gaussian Processes: Efficiency and Optimality Theorems of Autonomous Crowd Navigation}}}
\author{Pete Trautman\\
Galois, Inc.}

\maketitle


\begin{abstract}
We study the sparsity and optimality properties of crowd navigation and find that existing techniques do not satisfy both criteria simultaneously: either they achieve optimality with a prohibitive number of samples or tractability assumptions make them fragile to catastrophe.  For example, if the human and robot are modeled independently, then tractability is attained but the planner is prone to overcautious or overaggressive behavior.  For sampling based motion planning of joint human-robot cost functions, for $\nt$ agents and $T$ step lookahead, $\mathcal O(2^{2\nt T})$ samples are needed for coverage of the action space.   Advanced approaches statically partition the action space into free-space and then sample in those convex regions.  However, if the human is \emph{moving} into free-space, then the partition is misleading and sampling is unsafe: free space will soon be occupied.  We diagnose the cause of these deficiencies---optimization happens over \emph{trajectory} space---and propose a novel solution: optimize over trajectory \emph{distribution} space by using a Gaussian process (GP) basis.  We exploit the ``kernel trick'' of GPs, where a continuum of trajectories are captured with a mean and covariance function.  By using the mean and covariance as proxies for a trajectory family we reason about collective trajectory behavior without resorting to sampling. The GP basis is sparse and optimal with respect to collision avoidance and robot and crowd intention and flexibility.  GP sparsity leans heavily on the insight that joint action space decomposes into free regions; however, the decomposition contains feasible solutions only if the partition is dynamically generated. We call our approach \emph{$\mathcal O(2^{\nt})$-sparse interacting Gaussian processes}.
\end{abstract}

\vspace{-5pt}
\section{Introduction}
\noindent Roboticists have been investigating navigation in human environments since the 1990s~\cite{crowd-nav-survey}.  A hallmark study was the RHINO experiment, deployed as a tour guide in Bonn, Germany~\cite{rhino-robot}.  This experiment was followed by the MINERVA robot~\cite{minervapaper}, exhibited at the Smithsonian in Washington D.C.  These studies inspired research in the broad area of robotic navigation in the presence of humans, ranging from additional tour guide work~\cite{tumcityexplorer,shiomi_rss} to field trials for social partner robots~\cite{kandasidebyside, tum-human-robot-navigation}.  The underlying navigation architecture of nearly every pre-2010 study was \emph{predict-then-act} (an unfortunate misnomer, since robot action influences human response), where human and robot motion were assumed independent.  In~\cite{trautmaniros}, the independence assumption of predict-then-act was shown to lead to robot-human miscalibration: either the planner moved overcautiously (underestimating cooperation) or overaggressively (overestimating cooperation). Since a well behaved path existed (the humans did fine), this behavior was safety or efficiency suboptimal. Importantly, failures occurred across the crowd density spectrum:~\cite{trautmanicra2013} empirically demonstrated a 3x safety decrement compared to coupled human-robot models for densities ranging from 0.1-0.7 people/m$^2$.  Additionally, work in cognitive systems engineering~\cite{woods-maba-maba,woods-joint-cog-systems,dekker-folk-models}, team cognition~\cite{cooke-interactive-team,fiore-augment-team-cognition} and human-agent-robot teamwork~\cite{bradshaw-hart,bradshaw-coordination-hart} corroborates the criticality of \emph{joint} modeling.
\begin{figure}[t]
\vspace{0pt} 
\centering 
\includegraphics[width=0.44\textwidth]{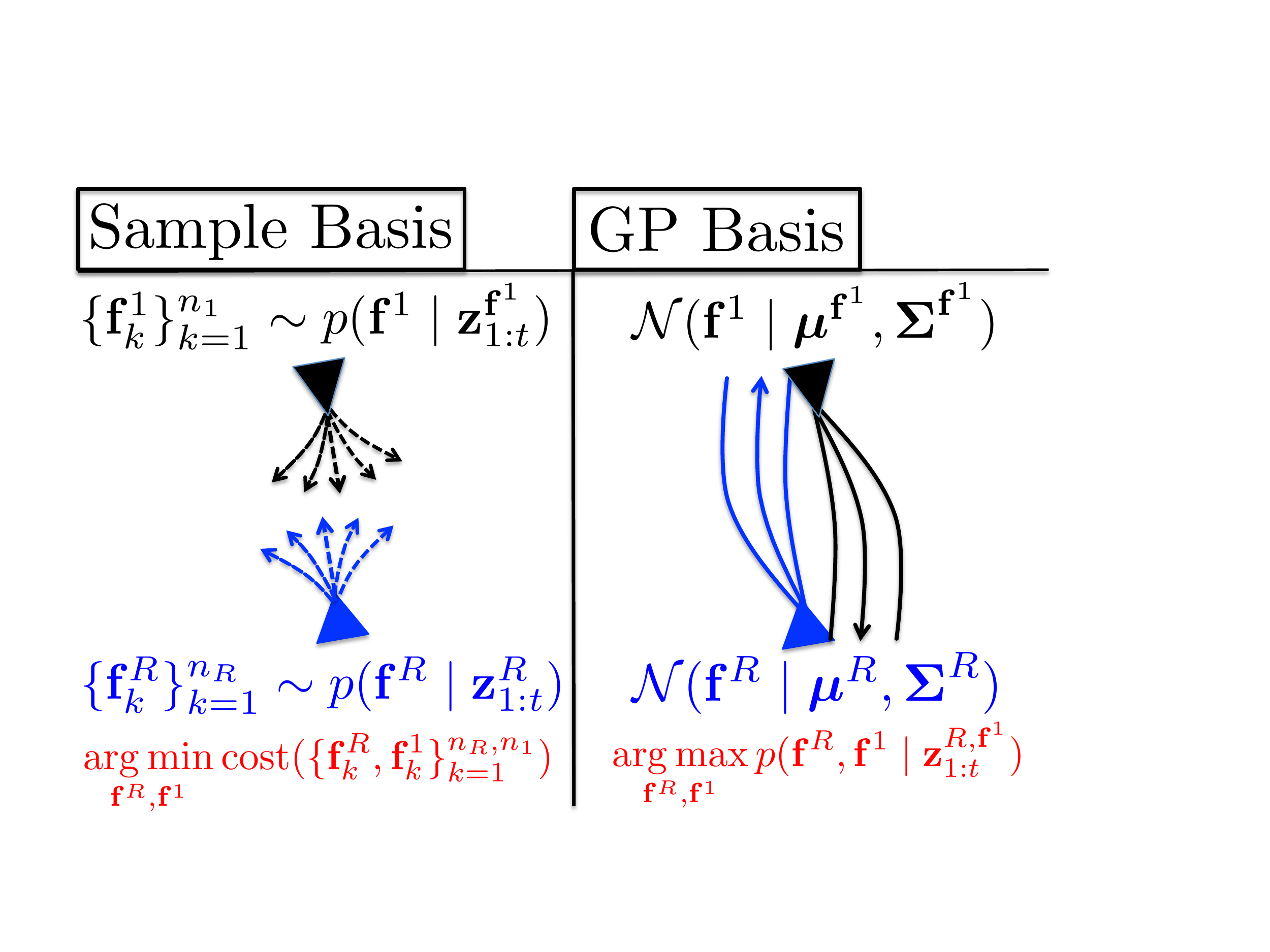}
\vspace{-5pt} 
\caption{\textbf{L} \emph{Trajectory} basis complexity is $\mathcal O(2^{2\nt T})$.  \textbf{R} \emph{Gaussian process} (GP) basis is complexity $\mathcal O(2^{\nt})$ and 2 bases capture optimal behavior.}
\label{fig:samples-vs-gps} 
\vspace{-20pt}
\end{figure}
Although~\cite{trautmanicra2013,kuderer-ijrr-2016} utilized joint human-robot models (like~\cite{helbing1}), neither achieved both optimality and tractability;~\cite{trautmanicra2013} showed that infinite samples achieves optimality (like~\cite{rrt*}), but finite sampling cannot provide guarantees.  In~\cite{kuderer-ijrr-2016}, the action space was decomposed into convex sampling regions;  as we will show, this approach fractures the pure sampling optimality guarantees.

In this paper, we extend~\cite{trautmanthesis,kuderer-ijrr-2016} so both optimality and sparsity are achieved.  We argue that correctly formulated sparsity is a precursor to optimality.  Our approach reconsiders a motion primitive---agents go left or right around each other---as a \emph{dynamic} interplay between agent Gaussian processes (GPs).   Consider a standard approach for determining free space: in Figure~\ref{fig:collide} the Voronoi diagram is built around the cyan agent at time $t$ and equal probability is assigned to left or right directions around the cyan agent.  The prediction densities in the left pane show that the cyan agent prefers left, so left and right are not equal probability (Figure~\ref{fig:bend} confirms this assertion).  Voronoi diagrams make an independence assumption: by constructing free space without considering human-robot interplay, the human and robot are decoupled (thus inducing human-robot miscalibration).  With interacting GPs, free space is dynamically generated by the co-evolution of human and robot trajectory \emph{distributions}; reactive obstacles---humans---are \emph{constraints} on the robot's motion.  Whereas interaction almost universally precipitates exponential complexity, interacting GPs \emph{decrease} joint action space size (we do not refute motion planning's complexity~\cite{piano-mover} since reactive obstacles allow higher granularity reasoning, mitigating combinatorics).  \emph{Succinctly: navigation in a GP basis is sparse and the coefficients rank optimality.}

\section{Related Work}
\label{sec:related-work}
\vspace{-0pt}
\noindent Independent agent Kalman filters are a common starting point for crowd prediction.  However, this approach leads to an uncertainty explosion that makes efficient navigation impossible~\cite{trautmaniros}. Some research has thus focused on \emph{controlling} uncertainty.  For instance~\cite{navintent,learningpeople,avoidcars} develop high fidelity independent models, in the hope that controlled uncertainty will improve navigation performance.  In~\cite{dutoit-tro} predictive uncertainty is bounded; intractability is avoided with receding horizon control~\cite{rhc-survey}; collision checking algorithms developed in~\cite{dutoit-collision-checking}, with roots in~\cite{blackmore-planning, blackmore-probabilistic-checking}, keeps navigation safe.  The insight is that since replanning is used, predictive covariance can be held constant at measurement noise.  In~\cite{georges_auro, joseph-dps-over-gps}  more sophisticated individual models are developed: GP mixtures~\cite{gpmlras} with a Dirichlet Process (DP) prior over mixture weights~\cite{teh-dp}.  DP priors allow the number of motion patterns to be data driven while the GP enables inter-motion variability; RRT~\cite{rrt} finds feasible paths; collision checking~\cite{aoude-collision-check} guarantees safety.  In the work above, independent agent models are the core contribution; we show that this is insufficient for crowd navigation.

Proxemics~\cite{hall-1966} studies proximity relationships, providing insight about social robot design: in~\cite{mead-proxemic, mead-social,takayama-proxemics} proxemics informs navigation protocol. Similarly,~\cite{kanda-position-prediction} studies pedestrian crossing behaviors using proxemics.  In~\cite{svenstrup-human} RRTs are combined with a proxemic potential field~\cite{potential-field-limits}.  Instead of using proxemic rules,~\cite{risk-rrt} adopts the criteria of~\cite{harmonious-hri}.  Personal space rules guide the robot's behavior by extending the Risk-RRT algorithm developed in~\cite{fulgenzi-motion} (Risk-RRT extends RRT  to include the probability of collision along candidate trajectories). The work in~\cite{althoff_robot_nav} is more agnostic about cultural considerations; a ``probabilistic collision cost'' is based on \emph{inevitable collision states}~\cite{probabilistic-ics}.  The work in~\cite{fraichard-three-rules} argues that robot and environment dynamics and a sufficient look-ahead guarantees collision avoidance.  Although these approaches model human-robot interaction, they do not model human-robot \emph{cooperation}: respecting a proper distance between human and robot (similar to~\cite{ziebartppp}) is emphasized.  

\emph{Human intention aware} path planning has recently become popular (see~\cite{crowd-nav-survey} for a comprehensive accounting).  In~\cite{intention-aware-crowd,intention-aware-momdp} multi-faceted human intent is modeled; the challenge is accurately inferring the \emph{true} intention and hedging against uncertainty.  This is addressed through the use of \emph{partially observable Markov decision processes} and \emph{mixed observability Markov decision processes}.   In~\cite{vaibhav-co-nav}, anticipatory indicators of human walking motion informs co-navigation;~\cite{fast-target-prediction} caches a library of motion level indicators to optimize intent classification;~\cite{disney-robot} takes a similar approach for anticipating when a human wants to interact with a robot.  In the important legibility and predictability studies of~\cite{dragan-rss-2013,dragan-hri-2013}, robot motion is optimized to meet predefined human acceptability criteria.   All these approaches model human intent \emph{a-priori}, rather than as an online human-robot interplay.  

Some approaches learn navigation strategies by observing example trajectories.  In~\cite{learning-social-robot}, learned motion prototypes  guide navigation. In~\cite{ziebartcabbie}, maximum entropy inverse reinforcement learning (max-Ent IRL) learns taxi cab driver navigation strategies from data; this method is extended to an office navigation robot in~\cite{ziebartppp}.  In~\cite{irlnavigate}, max-Ent IRL is extended to dynamic environments.  Their planner is trained using \emph{simulated} trajectories, and the method recovers a planner which duplicates the behavior of the simulator.  In~\cite{waugh-irl}, agents \emph{learn} how to act in multi-agent settings using game theory and the principle of maximum entropy.  
\begin{figure}[t]
\vspace{-0pt}
\centering
\subfloat[GPs on a collision course]{
  \includegraphics[width=0.485\linewidth]{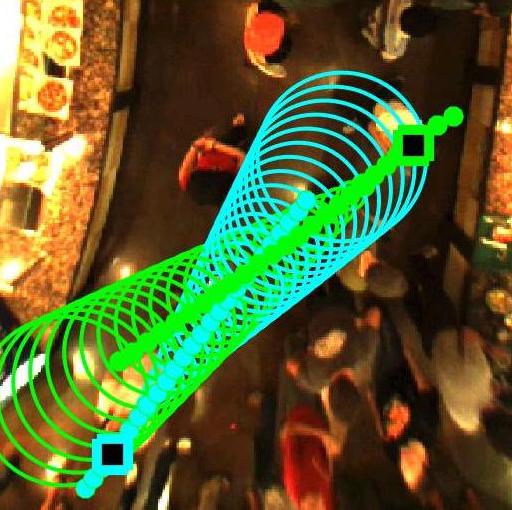}
  \label{fig:collide}
  }
\subfloat[GPs bending around each other]{
  \includegraphics[width=0.48\linewidth]{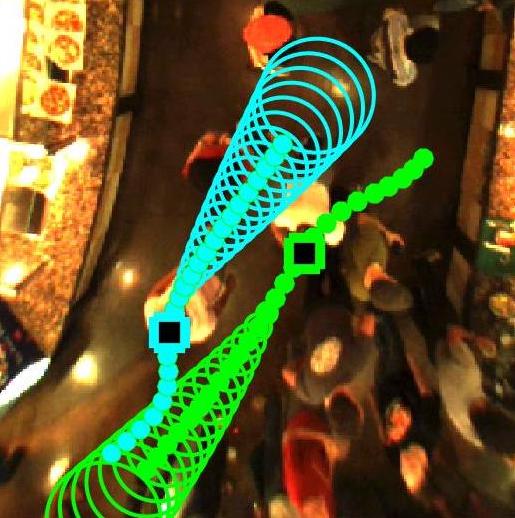}
  \label{fig:bend}
  }
  \vspace{-5pt}
\caption{Starting with GPs on a collision course, the GPs interact to ``bend'' around each other; (a, b) shows how humans actually move past one another. }
\label{fig:bend-collide}
\vspace{-15pt}
\end{figure}

In~\cite{sadigh-rss-2016} coupled human-robot models generated autonomous behaviors that were efficient and communicative (between a single human and a single robot); it remains unclear if coupled dynamical systems using reinforcement learning scales to multiple agents.  In~\cite{sadigh-iros-2016}, human state information was gathered by coupling robot action to human prediction.  Using deep learning for the crowd \emph{prediction} problem~\cite{social-lstm} raises an important question (since \cite{stanford-crowd-data} collects millions of training examples): can \emph{social navigation} be learned?  The combinatorics of social navigation (Section~\ref{sec:complexity}) makes naive approaches (e.g., brute force learning without exploiting sparsity) seem infeasible.
\begin{figure}[b]
\vspace{-20pt} 
\centering 
\includegraphics[width=0.49\textwidth]{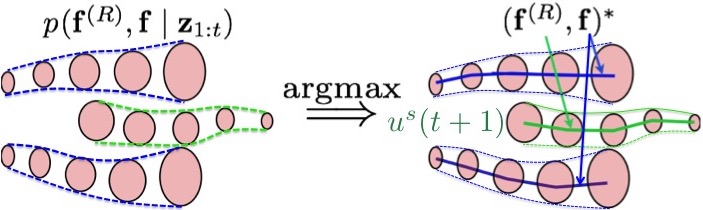}
\vspace{-15pt} 
\caption{Robot, crowd \emph{distributions} bend around each other;  $\pigpshortz$ is sparse in GP space; high value elements correspond to maxima.}
\label{fig:bending-tubes} 
\vspace{-3pt}
\end{figure}

In~\cite{kuderer-ijrr-2016} pure sampling is observed to be ineffective for coupled models; some mechanism is required to guide sampling (\cite{rvo,snape-hrvo} makes a similar observation, but RVO/HRVO was shown brittle to noise and motion ambiguity in~\cite{trautmanthesis}).  To guide sampling, Voronoi techniques applied to static obstacles parses the action space into convex regions.  However, static Voronoi techniques (or any static convex region identifier) leads to suboptimal strategies: by ignoring motion, a human-robot decoupling assumption is made.   In reactive environments, free space is dynamically generated by the probabilistic interplay of human and robot trajectory distributions (see Figure~\ref{fig:bending-tubes}).  Sparse IGP ($\sigp$) achieves this high level interaction, establishing sparsity and optimality guarantees in the process.

\vspace{-5pt}
\section{Terminology}
\label{sec:terminology}
\noindent We collect measurements $\bfz^R_{1:t}, \bfz^1_{1:t},\ldots,\bfz^{\nt}_{1:t}$ of the robot trajectory $\bfr \colon t\in\mathbb R \to \calX$, where $\calX$ is the joint action space and $\nt$ human trajectories $\bff = [\bff^1,\ldots,\bff^{\nt}]\colon t\in\mathbb R \to \calX$, which are governed by $\probot$ and $p(\bff^1\mid\bfz^{\bff^1}_{1:t}),\ldots, p(\bff^{\nt}\mid\bfz^{\bff^{\nt}}_{1:t})$.  We do not assume that every measurement of $\bfz^R_{1:t}$ and $\bfzf$ in $1:t$ is present.  We use the shorthand $\bfzf = [\bfz^{\bff^1}_{1:t},\ldots,\bfz^{\bff^{\nt}}_{1:t}]$; similarly, we let $\pf = \prod_{i=1}^{\nt}\peye.$  We model both $\bfi$ and $\bfr$ as stochastic processes.  Our robot and pedestrian models are represented in a GP basis (Figure~\ref{fig:gp-representation}):
\begin{align}
\label{eq:gp-mixtures}
\probot &= \robotsum, \nonumber\\
\peye &= \eyesum.
\end{align}   
The mixture weights are the likelihood of the data: $\wrobot=\calN(\bfr=\bfzr \mid \bmur_\ell,\bSigma_\ell)$ and $\weye= \calN(\bfi=\bfzeye \mid \bmufi_k,\bSigmafi_k)$.  Although the GPs evolve at each time step, we suppress time in the mean and covariance functions: $\bmu \equiv \bmu(t)$ and $\bSigma\equiv \bSigma(t)$.    As an illustration of the GP basis, consider Figure~\ref{fig:gp-representation} and the distribution $\peye=\sum_{k=L,R} \weye \normaleyek$.  If $w^{\bff^i}_L > w^{\bff^i}_R$ then $\argmax_{\bfh}\peye = \bmufi_L.$  Optimization over $\bmufi_L$ can be misleading since $\bmufi_R$ is ignored.  What if the human is debating whether to visit Lenny or Rhonda?  What if the human is ``flexible'' in how they intend to travel to Lenny or Rhonda?  For crowd navigation to be successful, we must reason over ambiguity and flexibility.  We make this precise.
\begin{defi}
\label{def:ambiguity}
We call $\bmufi_k,\bmu^R_j$ human and robot \emph{\textbf{intentions}}.  If $\neye,\nr>1$, \emph{\textbf{intention ambiguity}} is present.  We measure \emph{\textbf{intention preference}} with the weights $\weye,\wrobot$.  E.g, if one weight is large, intention ambiguity is small. 
\end{defi}
\begin{defi}
\label{def:flexibility}
Flexibility is the willingness of an agent to compromise about intention $\bmufi_k$ or $\bmu^R_j$.  Mathematically, the \emph{\textbf{flexibility}} of intent $\bmu^R_j$ or $\bmufi_k$ is $\bSigma^R_j$ or $\bSigmafi_k$.
\end{defi}
\noindent Flexibility is motivated by the following: suppose an agent is unimodal with model $\calN(\bfx\mid\bmu,\bSigma)$.  If the agent intends $\bmu$ strongly (by providing data supporting $\bmu$) then $\bSigma$ is small; the agent is $\bSigma$ unwilling to compromise on $\bmu$.  If the agent has not provided a strong signal supporting $\bmu$ then $\bSigma$ is small.
\begin{defi}
\label{defi:overlap}
The \emph{\textbf{probability of collision}} of $\normalrobotl$ and $\normaleyek$ is (see Section 5.2 of~\cite{trautmanthesis})
\begin{align}
\label{eq:z}
&P(\kappa)= \int \calN(\bfx \mid \bmur_\ell,\bSigmar_\ell)\calN(\bfx \mid \bmufi_k,\bSigmafi_k)d\bfx \nonumber\\
&= \wlk\exp\Big[-\frac{1}{2}(\bmur_\ell-\bmufi_k)^\top (\bSigmar_\ell + \bSigmafi_k)^{-1}(\bmur_\ell-\bmufi_k) \Big]\nonumber\\
&=\Z,
\end{align} 
where $\wlk = (2\pi)^{-T/2}|  \bSigmar_\ell +\bSigmafi_k |^{-1/2}$ and $\Z$ is the normalization coefficient resulting from multiplying two Gaussians. We note $\max \big[P(\kappa)\big] = \wlk$.   
\end{defi}
\noindent In crowd navigation we are interested in the probability of \emph{not} colliding $P(\neg\kappa) = \wlk-P(\kappa) = \znot.$  To mitigate symbol proliferation, we introduce a shorthand.
\begin{defi} Let  $\calN_{\bff^\chi,\sigma} = \calN(\bff^\chi\mid\bmu^\chi_\sigma,\bSigma^\chi_\sigma)$; $\chi \in\{R, 1,\ldots, \nt \}$, $\sigma \in \{\ell,k \}$. Thus $\calN_{\bff^3,k}=\calN(\bff^3\mid\bmu^{\bff^3}_{k},\bSigma^{\bff^3}_{k})$.
\end{defi} 
\begin{defi}The operator $\pnot$ is defined as
\label{defi:pnot}
\begin{align*}
\pnot\colon&\Big[\normalrobotshort\normaleyeshort\Big]\to \znot\normalrobotshort\normaleyeshort.
\end{align*} 
We define $\Lambdalk^{R,\bfi}\equiv\znot=P(\neg \kappa)$.
\end{defi}

\begin{figure}[t]
\vspace{5pt} 
\centering 
\includegraphics[width=0.42\textwidth]{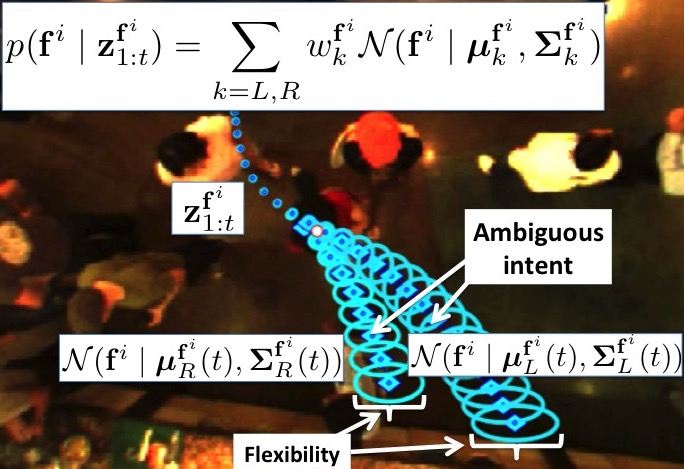}
\vspace{-5pt} 
\caption{Human ambiguity: $\calN(\bff^i\mid\bmufi_R,\bSigmafi_R),\calN(\bff^i\mid\bmufi_L,\bSigmafi_L)$ model intention to go direction $\bmufi_R,\bmufi_L$ (with preferences $w^{\bff^i}_R$ and $w^{\bff^i}_L$).  Within a mode, human has flexibility $\bSigmafi_{R,L}$ about $\bmufi_{R,L}$.  Data from~\cite{trautmanicra2013}}
\label{fig:gp-representation} 
\vspace{-15pt}
\end{figure}

\begin{defi}
\label{defi:decoupled-planning}
\emph{\textbf{Independent agent planning}} optimizes a decoupled cost function $C(\bfr,\bff) = C_R(\bfr)C_{\bff}(\bff)$.
\end{defi}

\begin{defi}
\label{defi:sbmp}
\emph{\textbf{Sampling based motion planning (SBMP)}} draws  $\bfr_k\sim\probot$ and $\bff_k\sim \pf$, and then computes the optimal joint trajectory $\argmin_{\bfr,\bff} C(\{\bfr_k,\bff_k \}_{k=1}^N)$, where $C(\{\bfr_k,\bff_k \}_{k=1}^N)$ is some joint cost function.  If we can sample uniformly from the cost function, then SBMP is a sampling based approximation of the joint distribution
\begin{align*}
\pigpshort = \sum_{k=1}^N \delta([\bfr,\bff]-[\bfr,\bff]_k).
\end{align*}
\end{defi}

\begin{defi}[\textbf{Convex lane approach~\cite{kuderer-ijrr-2016}}]
\label{defi:convex-lane}
\textbf{Convex lanes} are regions through the crowd with a single optima.  The \textbf{convex lane approach} creates a convex lane $\{\psi_i\}_{i=1}^m\in\calX$ partition, with weights $w^{\psi_i}=P(\psi_i)$, based on current pedestrian positions.  Inference samples lanes $\psi_j\sim\{w^{\psi_i}\}_{i=1}^m$ and then trajectories $[\bfr,\bff]_k\sim p_{\psi_j}(\bfr,\bff)$:
\begin{align*}
\pigpshort  &= \sum_{\psi_i\in\calX}w^{\psi_i}p_{\psi_i}(\bfr,\bff\mid\bfz^R_{1:t},\bfz^f_{1:t})\\
&=\sum_{\psi_i\in\calX}\sum_{k\in\psi_i} w^{\psi_i} \delta([\bfr,\bff]-[\bfr,\bff]_k).
\end{align*}
\end{defi}

\section{Statistical Principles of Crowd Navigation}
\label{sec:prelims}
\noindent In~\cite{trautmaniros}, robot navigation in dense human crowds was explored as a probabilistic inference problem, rather than as a cost minimization problem.  This enabled the perspective that navigation in crowds is a \emph{joint decision making problem}: how should the robot move, in concert with the humans around it, so that the intent and flexibility of each participant is simultaneously optimized?  The high level mathematics of this approach makes explicit how crowd navigation is best understood as joint decision making.  First, the joint predictive distribution $\pigpshort$ over the robot trajectory $\bfr$ and the crowd trajectory $\bff$ is formulated.  The robot's next action $u^I(t+1)$---what the robot is \emph{predicted} to do according to the human and robot models---is then clear (Figure~\ref{fig:bending-tubes}):
\begin{align}
\label{eq:igp}
[\bfr,\bff]^* &= \argmax_{\bfr,\bff}\pigpshort \\
u^I(t+1) &= \bff^{R*}(t+1)\nonumber .
\end{align}
\noindent The robot action $u^I(t+1)$  is interactive: $\argmax_{\bfr,\bff}$ balances $\bfr,\bff$'s intentions and flexibilities against collision avoidance, and so is the \emph{optimal robot-crowd decision}.  Furthermore, 
\begin{align*}
\pigpshort &= p(\bfr \mid \bfz^R_{1:t},\bff)\pf.
\end{align*}
If we have individual robot and crowd models $\probot$ and $\pf$, then a standard factorization~\cite{blake-book} is
\begin{align}
\label{eq:decompose}
\pigpshort = \interaction \probot \pf,
\end{align}
that is, $p(\bfr \mid \bfz^R_{1:t},\bff) = \interaction\probot.$
In this section we derive what property $\interaction$ \emph{must} have in order to preserve the statistical integrity of $\probot$ and $\pf$.  Using $\interaction$, we expand $\pigpshort$ in a GP basis and derive an  inference approach to find the optimal solution.  

\subsection{Statistical invariants of cooperative navigation}
\noindent For clarity, we study a single robot $\bfr$ and a single human $\bff^i$: 
\begin{align}
\label{eq:decomposeeye}
\pigpeyez&= \interactioneye p(\bfr \mid \bfz^R_{1:t},\bff^i)\peye,
\end{align}
where $\bfzt = [\bfzr,\bfzeye]$.  What should the interaction function $\interactioneye$ be?  First, let $\bfr-\bff^i \equiv [\bfr(1),\ldots,\bfr(T)]-[\bff^i(1)\ldots,\bff^i(T)]$, where $\bfr(t),\bff^i(t)\in \calX$ and we discretize the functions $\bfr,\bff^i$ by $1:T$ to define subtraction. A plausible choice is $\interactioneye = \prod_{\tau=1}^T(1-\exp(-\frac{1}{2\gamma}(\bfr(\tau)-\bff^i(\tau))^2))$ since this function models joint collision avoidance~\cite{trautmaniros}. However, recall that $\probot,\peye$ encode \emph{online} intention and flexibility information (Equation~\ref{eq:gp-mixtures}, Definitions~\ref{def:ambiguity},~\ref{def:flexibility}): the means and covariances capture inter-agent intention and flexibility that is \emph{specific to and influenced by} the environment.  If the avoidance distance $\gamma>0$ then $\interactioneye$ alters joint flexibility \emph{in a static and generic way} (Figure~\ref{fig:interaction-bar-delta}): $\interactioneye$ ignores the peculiarities of agent-specific flexibilities (e.g., agent 1 and 2 are flexible with each other in a specific way).   To preserve the statistics of $\probot,\peye$ we introduce the following ($\bfr_\ell\sim\probot,\bfi_k\sim\peye$):
\begin{align}
\label{eq:bdilk}
&\bdieyelk\equiv 
\lim_{\gamma\to 0}\Big[\prod_{\tau=1}^T\Big(1-\exp[-\frac{1}{2\gamma}\big(\bfr_\ell(\tau)-\bfi_k(\tau)\big)^2]\Big)\Big] \nonumber\\&=
\begin{cases}
1 &\text{if $\nexists t\in[1,T]$ such that }  \bfr_\ell(t)=\bfi_k(t)\\
0 &\text{if $\exists t\in[1,T]$ such that } \bfr_\ell(t) =\bfi_k(t).
\end{cases}
\end{align}
That is, $\bdieyelk$ accomplishes the following: if $\bfr_\ell(t)$ and $\bfi_k(t)$ intersect then $p(\bfr_\ell,\bfi_k\mid\bfz_{1:t})=0$; otherwise, $p(\bfr_\ell,\bfi_k\mid\bfz_{1:t}) = p(\bfr_\ell\mid\bfzr)p(\bfi_k\mid\bfzeye)$. We let $\bdi$ be the point-wise operator acting on $\probot,\peye$.  
\begin{lemma}[\textbf{$\interactioneye$ must have $\{0\}$-support}]
\label{lemma:bdi-unique}
If $\probot,\peye$ are GP mixtures and if the joint decomposes as in Equation~\ref{eq:decomposeeye}, then $\interactioneye$ must have $\{0\}$-support. 
\end{lemma}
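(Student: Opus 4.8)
The plan is to show that ``$\{0\}$-support'' is forced, where I read $\{0\}$-support as the requirement that $\interactioneye$ coincide with $\bdieye$: it must equal $0$ exactly on the collision set $\{(\bfr,\bfi)\colon \exists\, t,\ \bfr(t)=\bfi(t)\}$ and leave the product of the individual models untouched elsewhere, so that $1-\interactioneye$ is supported on the zero set of $\bfr-\bfi$. Using the factorization of Equation~\ref{eq:decompose} I write the joint as $\pigpeyez = \interactioneye\,\probot\,\peye$, and I interpret the phrase ``preserve the statistical integrity of $\probot$ and $\peye$'' as two concrete demands: (i) wherever the realized trajectories do not intersect, the joint must factor as $\probot\,\peye$, so that each agent's intention (mean) and flexibility (covariance) are transmitted to the joint unchanged; and (ii) wherever they intersect, collision avoidance forces the joint density to vanish.

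First I would dispatch demand (ii): on the collision set the individual models $\probot,\peye$ are generically nonzero, so $\pigpeyez=0$ there can only come from $\interactioneye=0$. The substance is demand (i). Restricting to a single robot mode $\normalrobotl$ and human mode $\normaleyek$, their product is proportional to a joint Gaussian carrying the moments that encode $\bmurl,\bSigmarl$ and $\bmufik,\bSigmafik$. The key step is to show that if $\interactioneye$ took any value in $(0,1)$ on a positive-measure set of non-colliding $(\bfr,\bfi)$, then $\interactioneye\,\normalrobotl\,\normaleyek$ would reweight this Gaussian's mass and hence shift its mean and perturb its covariance away from $\bmurl,\bSigmarl$ (resp.\ $\bmufik,\bSigmafik$)---distorting precisely the intention and flexibility that demand (i) requires be preserved. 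Because $\interactioneye$ is a fixed function of the realizations $(\bfr,\bfi)$ that must work for \emph{every} choice of mixture parameters, I can make this distortion explicit by an adversarial placement of a mode on the region where $\interactioneye\neq 1$, yielding a contradiction; hence $\interactioneye=1$ off the collision set.

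Combining the two, $\interactioneye$ is $\{0,1\}$-valued with zero set equal to the collision set, which is exactly $\bdieye$ and therefore the $\gamma\to 0$ limit in Equation~\ref{eq:bdilk}; this simultaneously explains why the smooth $\gamma>0$ candidate fails, since its soft collar alters flexibility ``in a static and generic way.'' I expect the main obstacle to be the rigorous form of demand (i): turning the intuitive claim that a smooth interaction distorts flexibility into a statement that \emph{any} deviation of $\interactioneye$ from $1$ on non-colliding configurations necessarily changes a covariance for some admissible GP mixture. The clean route is the moment computation for a Gaussian multiplied by a non-constant positive factor, combined with the adversarial-mixture argument; the delicate point is ensuring the conclusion survives the product structure over the horizon $1{:}T$ and the mixture sum, rather than holding only mode-by-mode.
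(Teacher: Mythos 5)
Your proposal is correct and follows essentially the same route as the paper: show that $\bdieye$ leaves the non-colliding mass of $\probot\,\peye$ untouched, and show that any $\interactioneye$ deviating from $1$ on a positive-measure set of non-colliding realizations reweights the Gaussian modes and so distorts the intention/flexibility statistics the lemma is meant to protect. Where you go further is the converse direction: the paper disposes of it in a single sentence (``since $\interactioneye$ has finite support, the flexibility of $\probot$ and $\peye$ will be misrepresented''), whereas you supply the actual mechanism---the moment computation for a Gaussian multiplied by a non-constant positive factor, together with adversarial placement of a mode on the region where $\interactioneye\neq 1$---which is precisely what is needed to make that assertion rigorous, and you correctly identify it as the delicate step. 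One caveat: your demand (ii), that the joint density vanish on the collision set, lets you conclude $\interactioneye=\bdieye$ exactly, which is stronger than what the lemma claims. The lemma only asserts $\{0\}$-support, i.e.\ that $1-\interactioneye$ is supported on the (measure-zero) zero set of $\bfr-\bfi$, and the paper explicitly remarks immediately after the proof that other $\{0\}$-support functions also satisfy the requirements, with $\bdieye$ merely preferred for having the strongest collision-avoidance properties. So the value taken on the collision set is a design choice motivated by collision avoidance, not a consequence of preserving the statistics of $\probot$ and $\peye$; your argument for demand (i) is the part that actually proves the lemma.
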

\begin{proof}
Let $\bfi_k\sim\peye$, $\bfr_\ell\sim\probot$, $\eta^{\bfi}_k = p(\bfi=\bfi_k\mid\bfzeye)$, and $\eta^R_{\ell} = p(\bfr=\bfr_{\ell}\mid\bfz^R_{1:t})$ throughout the proof. 
\begin{figure}[tp]
\vspace{-0pt} 
\centering 
\includegraphics[width=0.5\textwidth]{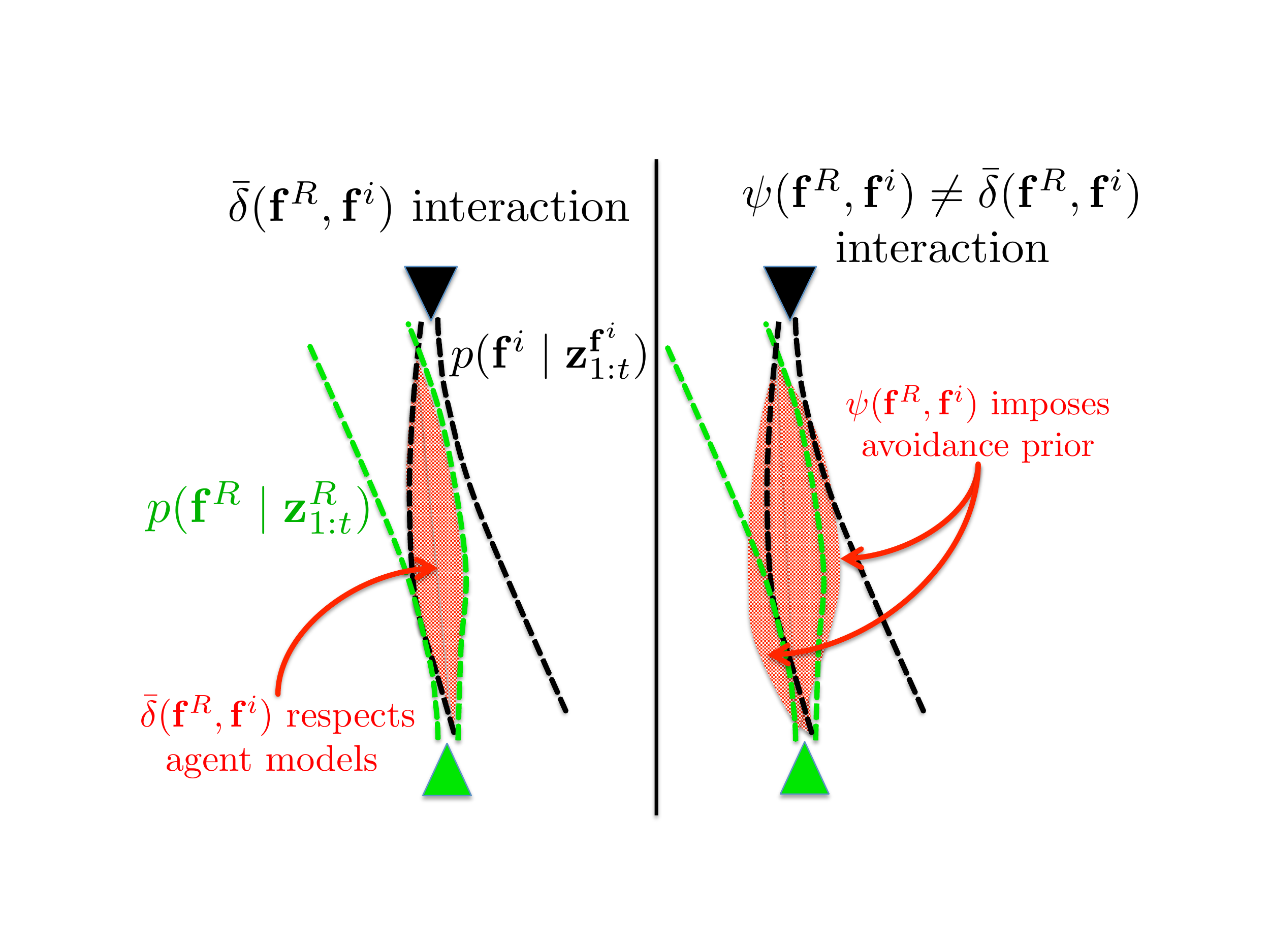}
\vspace{-5pt} 
\caption{Human in black, robot  in green, dotted lines one standard deviation; $\bdi$ only zeros overlap of $\peye,\probot$; $\interactioneye \neq \bdi$ \emph{imposes avoidance prior} on $\peye$ and $\probot$.}
\label{fig:interaction-bar-delta} 
\vspace{-15pt}
\end{figure} 
Let $\interactioneye=\bdieye$; then, for all $\ell$, $p(\bfr_\ell,\bfi_k\mid\bfz_{1:t})=\bar\delta(\bfr_\ell,\bfi_k)\eta^R_\ell\eta^{\bfi}_k.$  If $\nexists t\in[1,T]$ such that $\bfr_{\ell}(t)=\bfi_k(t),$ then $\bar\delta(\bfr_\ell,\bfi_k)\eta^R_\ell\eta^{\bfi}_k=\eta^R_{\ell}\eta^{\bfi}_k$; otherwise, it is zero.
Thus, $\bdieye$ respects the agent flexibility data contained in $\probot$ while preventing collision trajectories.  The same argument can be made for $\peye$; thus, $\bdieye$ respects the flexibility data in $\probot$ and $\peye$.

Conversely, let $\interactioneye\neq\bdieye$.  Then $p(\bfr_{\ell},\bfi_k\mid\bfz_{1:t}) = \psi(\bfr_{\ell},\bfi_k)\eta^R_{\ell}\eta^{\bfi}_k$.   Since $\interaction$ has finite support, the flexibility of $\probot$ and $\peye$ will be misrepresented: $\interaction$ is as an flexibility prior, but flexibility data is contained in the agent distributions.  
\end{proof}
\noindent Other $\{0\}$-support functions satisfy these requirements.  However, $\bdieye$ has the strongest collision avoidance properties.  In~\cite{trautmaniros}, $\interactioneye$ was based on observations of human cooperation; such a static and generic interaction function is statistically inappropriate and \emph{unnecessary}.  By capturing intention and flexibility in $\probot$ and $\peye$, statistical correctness demands $\interactioneye=\bdieye$.  

\subsection{Implementing $\bdieye$}
\label{sec:implement-bar-delta}
\noindent Unfortunately, $\bdieye\probot\peye$ is not analytic.  To construct an approximation, we begin by defining
\begin{align*}
\dieye\equiv
\begin{cases}
1 &\text{if $\nexists t\in[1,T]$ such that  } \bfr(t)\neq\bfi(t)\\
0 &\text{if $\exists t\in[1,T]$ such that } \bfr(t) \neq \bfi(t),
\end{cases}
\end{align*}
and note that $\bdieye = 1 - \dieye.$  Thus,
\begin{align}
\label{eq:bdi-action}
p(\bfr,&\bfi\mid \bfzt) =\bdieye\probot\peye \nonumber\\
&= (1 - \dieye)\Big[\robotsumshort\eyesumshort  \Big]\nonumber\\
&= \sumr\sumeye (1 - \dieye)\wrobot\weye \normalrobotshort\normaleyeshort.
\end{align}
Recalling $\pnot$ (Definition~\ref{defi:pnot}), we define single agent $\sigp.$
\begin{defi}
\label{defi:sigp-single}
\textbf{Single agent $\sigp$} is defined as
\begin{align}
\label{eq:sigp}
&\pigpeyez = \pnot \Big[\robotsumshort\eyesumshort\Big] \nonumber\\
&=\sumr\sumeye \znot \wrobot\weye\normalrobotshort \normaleyeshort\\
&= \sumr\sumeye \Lambdalk^{R,\bfi}\wrobot\weye \normalrobotshort \normaleyeshort \nonumber,
\end{align}
where $\wrobot,\weye$ are defined after Equation~\ref{eq:gp-mixtures}.   
\end{defi}

$\Lambdalk^{R,\bfi}=\znot$ measures overlap (regions of intersecting trajectories) between $\normalrobotshort,\normaleyeshort$.  Thus, $\Lambdalk^{R,\bfi}$ gives exponentially more weight to basis pairs $\normalrobotshort \normaleyeshort$ with less overlap than to those with more overlap.  The effect of $\pnot$ is to give the most weight to bases that balance collision avoidance against human and robot flexibility and intent $\wrobot\weye$; probability mass of $\pigpeyez$ is thus shifted around collision regions (right, Figure~\ref{fig:approximate-bar-delta}).  Similarly, in Equation~\ref{eq:bdi-action}, $[1-\dieye]$ zeros out intersecting trajectories in $\normalrobotshort,\normaleyeshort$.  The effect is to shift probability mass around collision regions while conserving $\wrobot\weye$ (left, Figure~\ref{fig:approximate-bar-delta}).

Equation~\ref{eq:sigp} is motivated by Lemma~\ref{lemma:bdi-unique}.  However, $\sigp$ is appealing in its own right: since $\Lambdalk^{R,\bfi}$ falls off exponentially as GPs overlap, \textbf{sparsity} is inherent to $\sigp$.  Further, bases with large values of $\Lambdalk^{R,\bfi}\wrobot\weye$ \textbf{simultaneously optimize} joint collision avoidance, intention and flexibility. 

\emph{Sparsity is thus a natural precursor to optimality.}
\subsection{Multiple agent formulation}
\noindent We extend Definition~\ref{defi:sigp-single} to $\nt$ interacting agents:
\begin{align*}
\pigpshortz &= \interaction\probot\pf \\
&=\interaction\probot\prod_{i=1}^{\nt}\peye \\
&=\interaction \robotsumshort \prod_{i=1}^{\nt}\eyesumshort.
\end{align*}

\noindent Lemma~\ref{lemma:bdi-unique} applies to $\interaction$; if  we let $\bdi = \prod_{i=1}^{\nt}\bdieye\prod_{j> i}^{\nt}\bar\delta(\bff^i,\bff^j)$ then $\sigp$ takes the form
\begin{align*}
&\pigpshortz  = \bdi \probot\pf\\
&=\prod_{i=1}^{\nt}\bdieye\prod_{j> i}^{\nt}\bar\delta(\bff^i,\bff^j)\Big[\probot \prod_{i=1}^{\nt}\peye\Big]\\
&=\prod_{i=1}^{\nt}\bdieye\prod_{j> i}^{\nt}\bar\delta(\bff^i,\bff^j) \Big[\robotsumshort\prod_{i=1}^{\nt}\eyesumshort\Big].
\end{align*}

\begin{figure}[tp]
\vspace{5pt} 
\centering 
\includegraphics[width=0.5\textwidth]{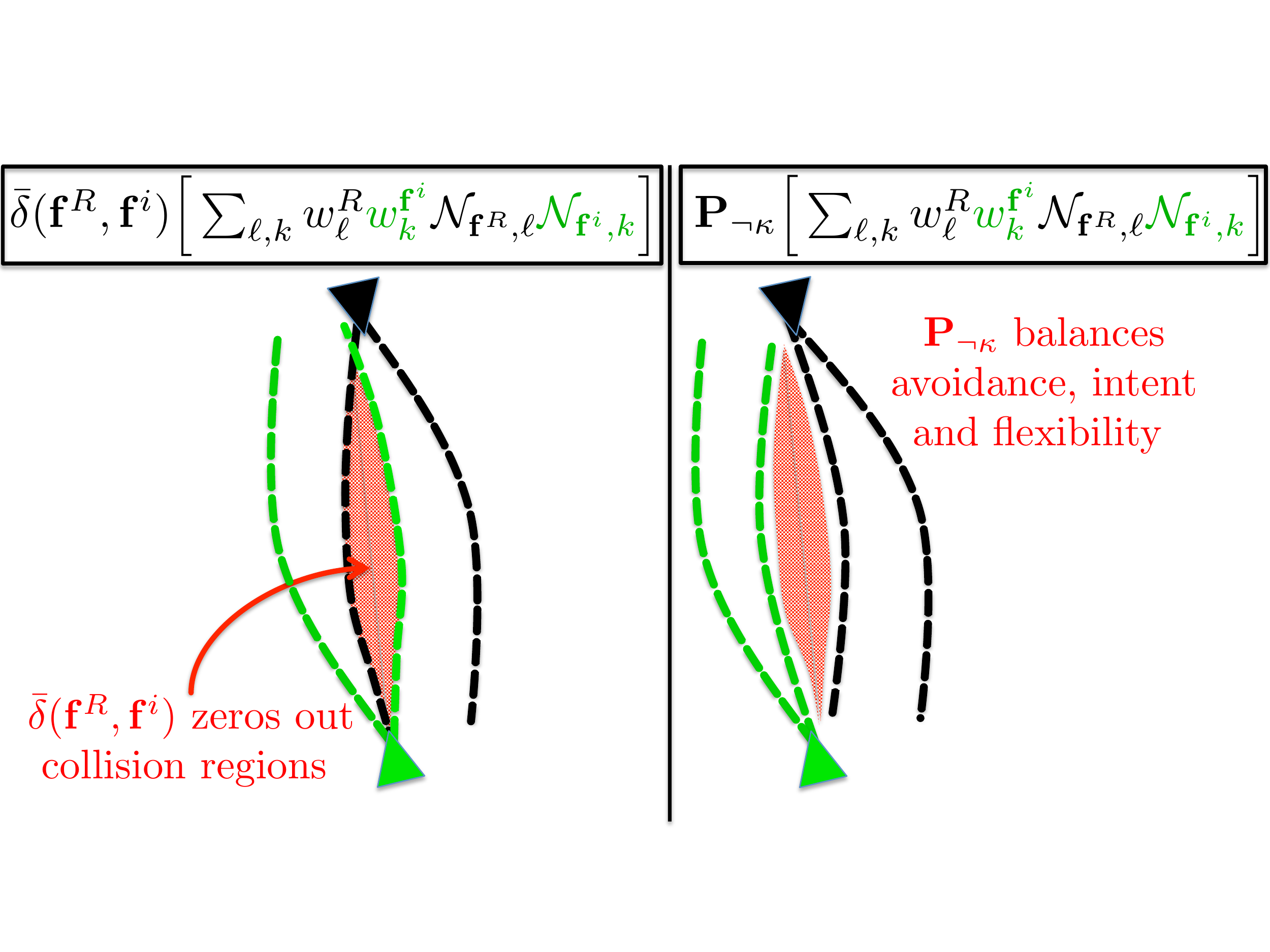}
\vspace{-5pt} 
\caption{\textbf{L:} $\bdieye$ zeros out overlap.  \textbf{R:} after $\pnot$ operates, GPs avoid collision region via $\znot$ while preserving intent via $\wrobot,\weye$.}
\label{fig:approximate-bar-delta} 
\vspace{-10pt}
\end{figure} 
\noindent\textbf{Example} Let $\nt=2$ and $\nr=N^{\bff^1}_t=N^{\bff^2}_t=1$. Then
\begin{align}
\label{eq:ma-example}
&\pigpshortz =\prod_{i=1}^{2}\bdieye\prod_{j> i}^{2}\bar\delta(\bff^i,\bff^j) \calN_{\bfr,1}\calN_{\bff^1,1}\calN_{\bff^2,1} \nonumber\\
&=\bar\delta(\bfr,\bff^1)\bar\delta(\bfr,\bff^2)\bar\delta(\bff^1,\bff^2) \calN_{\bfr,1}\calN_{\bff^1,1}\calN_{\bff^2,1},
\end{align}
The effect of $\bdi$ on $\probot\pf$ is to minimize the probability mass in regions of overlap between the GPs.  To construct $\sigp$ we follow the factorization of $\bdi$.  We note that $\pnot$ acts pairwise on GP bases:
\begin{align*}
&\pigpshortz = \pnot \Big[\robotsumshort\prod_{i=1}^{\nt}\sumfi\weyei\normaleyeshorti\Big]. 
\end{align*}
We explain the expression inside the brackets: $\pigpshortz$ is a mixture of weighted bases $\wrobot w^{\bff^1}_{k_1}\cdots w^{\bff^{\nt}}_{k_{\nt}}\normalrobotshort \calN_{\bff^1,k_1}\cdots \calN_{\bff^{\nt},k_{\nt}}$ with $\ell\in \{1,\ldots, \nr\}$ and for each $i=1:\nt$ we have $k_i \in \{1,\ldots, \neye \}$.  Thus, before $\pnot$ operates on this mixture, there are $N= \nr\prod_{i=1}^{\nt}\neye$ components.  We generalize $\pnot$ to act on each pair $\ell,k_i$ of each base $\normalrobotshort \calN_{\bff^1,k_1}\cdots \calN_{\bff^{\nt},k_{\nt}}$ (we leverage the empirical observation from~\cite{trautmanicra2013} that pedestrian robot interaction is more important than pedestrian-pedestrian interaction).  Thus
\begin{align*}
\pnot&\Big[\normalrobotshort \calN_{\bff^1,k_1}\cdots \calN_{\bff^{\nt},k_{\nt}} \Big] = \\
&\bLambda^{R,\bff^1}_{\ell,k_1}\cdots \bLambda^{R,\bff^{\nt}}_{\ell,k_{\nt}}\normalrobotshort \calN_{\bff^1,k_1}\cdots \calN_{\bff^{\nt},k_{\nt}}.
\end{align*} 
\begin{defi} Multi-agent $\sigp$ is defined as
\label{defi:sigp-n}
\begin{align}
\label{eq:sigp-n}
&\pigpshortz = \nonumber\\
& \sum_{\eta=1}^N[\bLambda^{R,\bff^1}\cdots \bLambda^{R,\bff^{\nt}} w^R w^{\bff^1}\cdots w^{\bff^{\nt}}]_{\eta}\big[\calN_{\bfr} \calN_{\bff^1}\cdots \calN_{\bff^{\nt}}\big]_{\eta}\nonumber\\
&= \sum_{\eta=1}^N [\bLambda \bs{w}]_\eta [\calN_{\bfr} \calN_{\bff^1}\cdots \calN_{\bff^{\nt}}\big]_{\eta}
\end{align}
where $\eta \in \{1,\ldots, \nr\prod_{i=1}^{\nt}\neye \}$.  That is, $\eta$ enumerates all the possible combinations of bases discussed above.  Let $[\bLambda \bs{w}]_\eta=[\bLambda^{R,\bff^1}\cdots \bLambda^{R,\bff^{\nt}} w^R w^{\bff^1}\cdots w^{\bff^{\nt}}]_{\eta}$.  From Equation~\ref{eq:igp} the optimal action is $u^I(t+1) = \bff^{R^*}(t+1).$
\end{defi}

\subsection{Computational complexity} 
\label{sec:complexity}
\noindent Consider the case of a single agent and a robot (Figure~\ref{fig:approximate-bar-delta}).  Here $\Lambdalk^{R,\bfi}$ has non-trivial value except for a left and right mode for each agent (since $\Lambdalk^{R,\bfi}$ decays exponentially as the GPs overlap).  Thus, even though $N = \nr\neye$, the number of non-trivial bases is $\nsigp = 2.$ For larger values of $\nt$, this result still holds: since our GP basis models ``high level'' activity---each agent must maintain a ``left'' and ``right'' GP basis---$\nsigp=\mathcal O(2)$.  Of course, our basis elements $\calN_{\bff^1}\cdots \calN_{\bff^{\nt}}$ grow more complex with $\nt$---a ``left'' and a ``right'' GP for each agent---so the number of GPs we compute is $\mathcal O(2^{\nt})$.  
\begin{lemma}[\textbf{$\sigp$ $\mathcal O(2^{\nt})$-sparse}]
\label{lemma:sparsity}
To approximate Equation~\ref{eq:sigp-n} accurately, $\mathcal O(2)$ GPs for each of the $\nt$ agents is required.  We say that multi-agent $\sigp$ is $\mathcal O(2^{\nt})$-sparse.
\end{lemma}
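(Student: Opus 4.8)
The plan is to show that although $\pigpshortz$ in Equation~\ref{eq:sigp-n} is a mixture of $N=\nr\prod_{i=1}^{\nt}\neye$ joint bases, all but $\mathcal O(2^{\nt})$ of the coefficients $[\bLambda\bs{w}]_\eta$ are exponentially suppressed, so that truncating to the surviving bases incurs vanishing error. I would make ``accurate approximation'' precise by calling a basis \emph{non-trivial} when $[\bLambda\bs{w}]_\eta \ge \epsilon\,\max_\eta [\bLambda\bs{w}]_\eta$ for a fixed tolerance $\epsilon>0$, and requiring that the discarded mass $\sum_{\eta\,\text{trivial}}[\bLambda\bs{w}]_\eta$ be a negligible fraction of $\sum_\eta[\bLambda\bs{w}]_\eta$. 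The goal is then to bound the number of non-trivial bases by $\mathcal O(2^{\nt})$.

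First I would exploit the exponential selectivity of the pairwise factor. From Definition~\ref{defi:overlap} and Equation~\ref{eq:z}, $\bLambda^{R,\bff^i}_{\ell,k_i}=\wlk\big(1-\exp[-\tfrac{1}{2}(\bmurl-\bmufiki)^\top(\bSigmarl+\bSigmafiki)^{-1}(\bmurl-\bmufiki)]\big)$, which rises monotonically from $0$ at full overlap (Mahalanobis separation $d_{\ell,k_i}\to 0$) to $\wlk$ when the tubes are disjoint ($d_{\ell,k_i}\to\infty$); equivalently the collision penalty $\wlk-\bLambda^{R,\bff^i}_{\ell,k_i}=\wlk\exp[-\tfrac12 d_{\ell,k_i}^2]$ decays exponentially in separation. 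Hence $\bLambda$ acts as a sharp collision-free selector, keeping only $(\ell,k_i)$ pairs whose trajectory distributions do not intersect. The crux of the argument is the next step: for a fixed robot mode $\ell$, I would argue that among agent $i$'s $\neye$ modes, at most two produce non-overlapping tubes---the modes in which agent $i$ passes the robot on the left or on the right. This is the planar left/right motion primitive of Figures~\ref{fig:bend-collide} and~\ref{fig:gp-representation}: in the $2$-D action space $\calX$, goal-directed, non-self-intersecting collision-free paths around one reactive agent occupy the two homotopy classes ``pass left'' and ``pass right,'' and any extra GP mode carried by a richer model either coincides with one of these (redundant) or crosses the robot tube (suppressed by $\bLambda$). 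So each agent contributes $\mathcal O(2)$ surviving modes.

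I would then close the count using the factored structure. Because pedestrian-pedestrian coupling is neglected (the empirical observation of~\cite{trautmanicra2013} carried into Definition~\ref{defi:sigp-n}), the joint coefficient factorizes as $[\bLambda\bs{w}]_\eta=\wrobot\prod_{i=1}^{\nt}\bLambda^{R,\bff^i}_{\ell,k_i}\weyei$, so a joint basis is non-trivial only if \emph{every} one of its $\nt$ pairwise factors is. By the previous step each factor admits at most two surviving $k_i$, giving at most $2^{\nt}$ surviving human configurations; the collision-free coupling enforced by $\bLambda$ then selects an $\mathcal O(1)$ compatible robot mode for each configuration (this is why the single-agent case collapses to $\nsigp=2$ rather than $\nr\cdot 2$), so $\nsigp=\mathcal O(2^{\nt})$. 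Equivalently, representing each agent by its two surviving GPs needs $\mathcal O(2)$ GPs per agent and $\mathcal O(2^{\nt})$ joint bases $\calN_{\bfr}\calN_{\bff^1}\cdots\calN_{\bff^{\nt}}$. Finally I would bound the truncation error: each discarded coefficient is suppressed by a factor $\exp[-\tfrac12 d_{\min}^2]$ relative to a surviving one, so the total discarded mass is $\mathcal O(N\exp[-\tfrac12 d_{\min}^2])$, which is negligible once surviving and discarded tubes are separated by more than measurement noise.

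I expect the main obstacle to be the ``exactly two modes per agent'' claim. It is \emph{not} forced by the Gaussian algebra---$\bLambda$ only tells us that overlapping pairs are suppressed, not that precisely two remain. Making it rigorous requires a topological argument that goal-directed collision-free paths around a single moving obstacle fall into two homotopy classes, together with the modeling premise that the GP mixture places appreciable prior weight $\weyei$ only on representatives of these classes; both are where the ``high level'' left/right primitive does the real work, and I would flag this as the assumption on which the $\mathcal O(2^{\nt})$ bound rests.
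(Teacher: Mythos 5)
Your proposal follows essentially the same route as the paper: the paper's own proof is a two-line inspection argument that the coefficients $[\bLambda\bs{w}]_\eta$ are exponentially suppressed whenever GPs overlap and that the high-level left/right motion primitive leaves $\mathcal O(2)$ non-trivial modes per agent, hence $\mathcal O(2^{\nt})$ joint bases. Your version is a more careful rendering of the same idea, and you are right to flag that the ``exactly two surviving modes per agent'' step is a modeling premise (the planar left/right homotopy primitive plus where the mixture puts prior weight) rather than a consequence of the Gaussian algebra --- the paper likewise asserts it without proof.
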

\begin{proof}
By inspection of Equation~\ref{eq:sigp-n}, we see that each $\big[\bLambda^{R,\bff^1}\cdots \bLambda^{R,\bff^{\nt}}\big]_{\eta}$ term only has non-trivial weight for GPs that do not overlap.  By the argument above, this is $\mathcal O(2)$ GPs for each of the $\nt$ agents.  
\end{proof}

GPs, as used here, are fast to compute.   In~\cite{trautmanicra2013}, 60 GPs were computed, and then 10000 samples were drawn from each GP, all in about 0.1 seconds.  Critically, however, we do not have an analytic transform into sparse space; instead, we resort to approximate inference to compute the high value bases.     

In contrast, trajectory sampling requires a decision at both the high level (``left'' or ``right'') and at each time step in the look-ahead ($T=30$ in~\cite{trautmanicra2013}).  In~\cite{trautmanicra2013}, $\nt=5$ (although up to 30 pedestrians were within a few meters of the robot); using this value, the number of samples needed for accurate navigation in dense crowds is $\mathcal O(2^{2\nt T})=\mathcal O(10^{90})$.  
\begin{lemma}
SBMPs (Def.~\ref{defi:sbmp}) need $\mathcal O(2^{2\nt T})$ computations. 
\end{lemma}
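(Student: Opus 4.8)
The plan is to reduce the claim to a covering count for the joint action space and then read off its dimension. By Definition~\ref{defi:sbmp}, SBMP replaces the continuous joint density by the empirical measure $\pigpshort=\sum_{k=1}^N\delta([\bfr,\bff]-[\bfr,\bff]_k)$ and returns the minimizer of the cost over the drawn samples. First I would argue that, because no concentration of the cost is assumed, the discrete $\argmin$ can recover the continuous optimum only if every $\epsilon$-ball of the support of the joint distribution over $\calX$ contains a sample; that is, the samples must form an $\epsilon$-net (a cover) of the joint trajectory space. This is precisely the gap between the finite-sample regime and the infinite-sample optimality of~\cite{trautmanicra2013}. Hence the number of computations is, up to a constant per-sample cost evaluation, the covering number of that space.

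Next I would compute the dimension of the space being covered. The workspace $\calX\subseteq\mathbb{R}^2$ is planar, and throughout Section~\ref{sec:prelims} a trajectory is the discretization $[\bfx(1),\ldots,\bfx(T)]$, so one agent's $T$-step trajectory is a point in $\mathbb{R}^{2T}$. The joint trajectory $[\bfr,\bff^1,\ldots,\bff^{\nt}]$ of the $\nt$ agents therefore lives in $\mathbb{R}^{2\nt T}$. Equivalently---connecting to the per-step ``decision'' picture preceding the lemma---a binary grid discretization spends two bits per agent per time step (one per planar axis), so the induced decision tree has $2^{2\nt T}$ leaves; the two derivations agree.

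Then the covering step is routine: to guarantee a sample within a fixed resolution of every point of a $d$-dimensional region requires $\Omega(c^{d})$ samples, where $c>1$ is set by the resolution, and at the coarsest binary resolution $c=2$. Substituting $d=2\nt T$ gives $2^{2\nt T}$ samples, each incurring one cost evaluation in the $\argmin$, for a total of $\mathcal O(2^{2\nt T})$ computations; a uniform grid attains this count, so the bound is tight. As a check, $\nt=5,\,T=30$ yields $2^{300}\approx10^{90}$, matching the figure cited above.

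The main obstacle is the first step: rigorously showing that \emph{coverage}, not merely a large sample count, is what SBMP requires. I would make this precise by exhibiting that for any finite sample set leaving an uncovered $\epsilon$-region, one can place the true joint optimum inside that region, so the sampled $\argmin$ is bounded away from the optimum---the ``finite sampling cannot provide guarantees'' phenomenon invoked in the introduction. A secondary, bookkeeping subtlety is the agent count: treating the ego robot as a separately sampled trajectory replaces $\nt$ by $\nt+1$ in the exponent, so I would follow the paper's convention of enumerating the $\nt$ crowd agents, which produces the stated exponent and keeps the comparison with the $\mathcal O(2^{\nt})$ GP count on equal footing.
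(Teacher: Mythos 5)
Your proposal is correct and follows essentially the same route as the paper: the paper's proof likewise asserts that SBMP must \emph{cover} the joint action space (citing low-probability safety events as the reason coverage cannot be skipped) and reads the exponent $2\nt T$ off a binary decision per agent per time step over the $T$-step look-ahead. Your covering-number formalization---identifying the joint trajectory space as $(2\nt T)$-dimensional and exhibiting the adversarial placement of the optimum in any uncovered $\epsilon$-region---supplies the rigor that the paper's two-sentence proof leaves implicit, but it is the same argument arriving at the same count.
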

\begin{proof}
Based on the argument above, sampling based motion planners need $\mathcal O(2^{2\nt T})$ to provide coverage of the space.  In general, coverage is needed---especially for low probability events (Figure~\ref{fig:vd-fail})---to ensure safety failures do not occur.
\end{proof}
\noindent \emph{Note the difference here with $\sigp$:} by exploiting the kernel trick of GPs, the mean and covariance function are used as proxies for families of trajectories.  This is why the computational complexity of $\sigp$ is $\mathcal O(2^{2\nt})$ and not $\mathcal O(2^{2\nt T})$.   Further, by \emph{interacting} the GPs, and dynamically creating free space by co-evolving human and robot distributions, we efficiently guide probability mass placement (Figure~\ref{fig:vd-fail}).  As~\cite{kuderer-ijrr-2016} notes, the combinatorics of multiple interacting agents is too extreme to hope for tractability---and thus optimality guarantees---without exploiting the structure of the space.

\begin{figure}[t]
\vspace{5pt} 
\centering 
\includegraphics[width=0.46\textwidth]{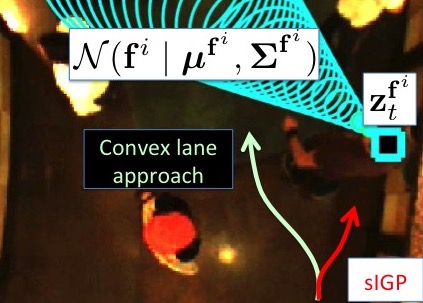}
\vspace{-5pt} 
\caption{Convex lane (CL) and independent agent methods fragile to safety failures.  CL derives free space based on $\bfz^{\bfi}_t$, so navigation is in the person's path.  $\sigp$ derives open lanes by interacting robot and human GPs.  Thus $\sigp$ knows to follow behind the person's current position.  Data from~\cite{trautmanicra2013}}
\label{fig:vd-fail}
\vspace{-15pt} 
\end{figure}

\subsection{Approximate inference of the GP basis of $\pigpshortz$}
\label{sec:inference}
\noindent Inference of $\sigp$ is slightly different than conventional inference of distributions.  In particular, if we find the \emph{basis element} $ \calN_{\bfr,\ell}\prod_{i=1}^{\nt} \calN_{\bfi,k_i}$ with the largest coefficient $\balpha = \prod_{i=1}^{\nt}\bs{\Lambda}^{R,\bfi}_{\ell,k_i}\prod_{j>i}^{\nt}\bs{\Lambda}^{\bfi,\bff^j}_{k_i,k_j}\wrobot\weyei$ then we have found the optimal navigation strategy.  Unfortunately, we do not have an analytic procedure to discover basis elements with large coefficients $\balpha$; we thus resort to approximation.  Equation~\ref{eq:gp-mixtures} did not specify how to generate the GP bases.  Previous work has addressed this: in~\cite{trautmanicra2013}, goals were inferred and GPs were conditioned on those goals.  In~\cite{joseph-dps-over-gps}, the number of components of a GP mixture was learned with Dirichlet process priors.

Instead, we sample GPs directly.  This is tractable since GPs are specified by the mean and covariance function.  Thus,
\begin{align*}
\bmurl &\sim p(\bmur\mid\bfzr)\\
\bSigmarl&\sim p(\bSigmar\mid\bfzr)
\end{align*}
and similarly for $\bmufiki,\bSigmafiki$.  We choose $p(\bmur\mid\bfzr) = \calN(\bfr \mid \bmur_0,\bSigmar_0)$, the most likely GP given the data $\bfzr$, and sample $\bSigmarl \sim \alpha\bSigma_0$ where $\alpha \sim \mathcal U[.1, 1]$ (and similarly for each agent $\bfi$).  The weights are computed as the likelihood of the data $\bfz_{1:t}$ conditioned on the samples: $\wrobot=\calN(\bfr=\bfzr \mid \bmur_\ell,\bSigma_\ell)$ and $\weye= \calN(\bfi=\bfzeye \mid \bmufi_k,\bSigmafi_k).$

\section{Optimality theorems of Crowd Navigation}
\label{sec:sparsity}
\begin{theorem}[\textbf{$\sigp$ optimal}]
\label{thrm:sigp-optimal}
sIGP is jointly optimal with respect to collision avoidance, intent and flexibility of the robot and the $\nt$ agents.
\end{theorem}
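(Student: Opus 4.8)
The plan is to prove that the MAP solution $[\bfr,\bff]^*=\argmax_{\bfr,\bff}\pigpshortz$ simultaneously extremizes all three criteria by exhibiting its scalar coefficient $\balpha=\prod_{i=1}^{\nt}\bLambda^{R,\bfi}_{\ell,k_i}\prod_{j>i}^{\nt}\bLambda^{\bfi,\bff^j}_{k_i,k_j}\wrobot\weyei$ as a product whose factors are in one-to-one correspondence with collision avoidance, intent, and flexibility. First I would isolate the factors: each $\bLambda$ term equals $P(\neg\kappa)=\znot$ (Definitions~\ref{defi:overlap},~\ref{defi:pnot}) and hence scores collision avoidance between a robot basis and a pedestrian basis (or between two pedestrian bases); the weights $\wrobot,\weyei$ score intention preference (Definition~\ref{def:ambiguity}); and the flexibilities $\bSigmarl,\bSigmafiki$ (Definition~\ref{def:flexibility}) enter both through the covariances of the Gaussian factors $\normalrobotshort,\normaleyeshorti$ and, via $\wlk=(2\pi)^{-T/2}|\bSigmarl+\bSigmafiki|^{-1/2}$ and the quadratic form inside $\Z$, through the $\bLambda$ terms themselves.

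The second step is to certify that this product is the \emph{correct} objective, not merely a convenient one. Here I would invoke Lemma~\ref{lemma:bdi-unique}: because the only admissible interaction function $\interaction=\bdi$ has $\{0\}$-support, it annihilates exactly the colliding trajectories while leaving the means and covariances of $\probot$ and $\pf$ undistorted. Consequently $\pigpshortz$ is the genuine joint posterior in the GP basis, with intent and flexibility carried verbatim from the agent models and collision avoidance supplied solely by $\bdi$ (equivalently, by $\pnot$). This is the step that upgrades ``$\balpha$ factors into three recognizable pieces'' into ``maximizing $\balpha$ is the right thing to do,'' since it rules out any static avoidance prior silently reshaping intent or flexibility.

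The final step is the optimality conclusion. Because $\balpha$ is multiplicative and each factor is nonnegative, the maximizer cannot raise one criterion (say, selecting a higher-preference intention via a larger $\wrobot$) by driving another (say, a collision-avoidance factor $\bLambda^{R,\bfi}_{\ell,k_i}$) toward zero without lowering $\balpha$; hence $\argmax_{\bfr,\bff}\pigpshortz$ selects precisely the trajectory family that jointly balances avoidance, intent, and flexibility of the robot and all $\nt$ agents, which is the assertion. I expect the main obstacle to be the flexibility criterion. Unlike intent (a single weight factor) and avoidance (a single $\bLambda$ factor), flexibility is not isolated in one factor but is entangled across the Gaussian covariances and the $\wlk,\Z$ dependence of $\bLambda$. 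Making precise that maximizing $\balpha$ \emph{respects}---rather than arbitrarily inflates or shrinks---the agent-specific covariances $\bSigmarl,\bSigmafiki$ requires tracking how they propagate through $|\bSigmarl+\bSigmafiki|^{-1/2}$ and through the quadratic form $(\bmurl-\bmufiki)^\top(\bSigmarl+\bSigmafiki)^{-1}(\bmurl-\bmufiki)$ in $\Z$, and appealing once more to Lemma~\ref{lemma:bdi-unique} so that the covariances entering $\pigpshortz$ are exactly those supplied by the data rather than ones imposed by the interaction.
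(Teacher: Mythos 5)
Your proposal is correct and follows essentially the same route as the paper: the paper's proof likewise reads the optimality off the factored coefficient $[\bLambda\bs{w}]_{\eta}$ in Equation~\ref{eq:sigp-n}, identifying the $\bLambda$ factors with collision avoidance and the weights and covariances with intent and flexibility, and takes $\eta^*=\max_\eta[\bLambda\bs{w}]_\eta$ as the jointly optimal basis element; your explicit appeal to Lemma~\ref{lemma:bdi-unique} to certify that the coefficient is the \emph{right} objective is only implicit in the paper. The one step the paper states that you omit is the passage from ``largest coefficient'' to ``$\argmax$ of the mixture,'' which it justifies by noting that the collision-avoidance coefficients confine each surviving basis element to its own convex lane.
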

\begin{proof}
Inspecting Equation~\ref{eq:sigp-n}, we see that the  coefficient $\eta^* = \max_{\eta}[\bLambda\bs{w}]_{\eta}$ is an optimal balance of collision avoidance, intent and flexibility of each agent and the robot.  Since the collision avoidance coefficients constrain a single GP to a convex lane, we have that $\argmax_{\bfr,\bff}\pigpshortz = [\calN_{\bfr} \calN_{\bff^1}\cdots \calN_{\bff^{\nt}}\big]_{\eta^*}$ and $u^I(t+1) = \bmur_{\eta^*}$---see Eq.~\ref{eq:igp}.
\end{proof}

\noindent These sparsity (Lemma~\ref{lemma:sparsity}) and optimality results make sIGP a compelling crowd navigation approach.  We consider sparsity and optimality properties of state of the art approaches. 
\begin{figure}[t]
\vspace{0pt} 
\centering 
\includegraphics[width=0.44\textwidth]{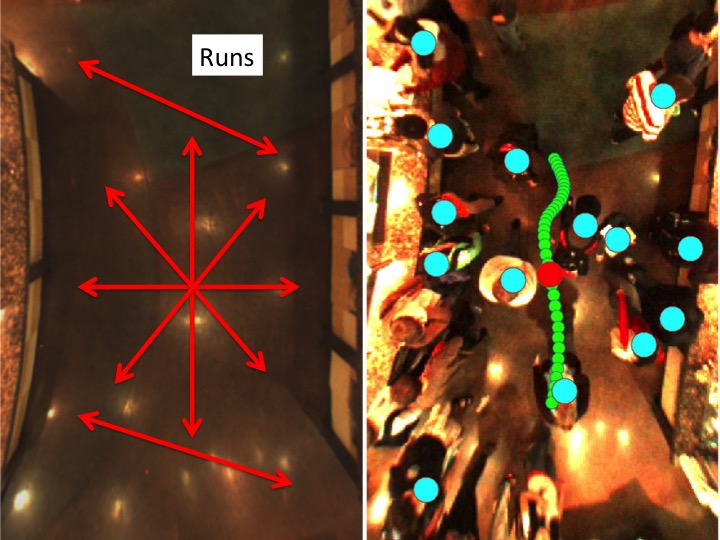}
\vspace{-0pt} 
\caption{\textbf{L:}  12 Runs were executed.  Double arrow indicates planner went in both directions.  \textbf{R:} Snapshot from run.  Planner history and prediction in green, current position in red.  Cyan dots are $\nt=14$ agents. Data from~\cite{trautmanicra2013}}
\label{fig:experiment-pic} 
\vspace{-20pt}
\end{figure}
\begin{coro}
\label{coro:predict-then-act}
\textbf{Independent agent based planning methods} (Definition~\ref{defi:decoupled-planning}) are suboptimal with respect to collision avoidance and intent and flexibility of the robot and human agents.
\end{coro}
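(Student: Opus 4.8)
The plan is to show that independent agent planning corresponds precisely to the degenerate interaction function $\interaction=1$, which Lemma~\ref{lemma:bdi-unique} has already excluded as statistically valid, and then to invoke Theorem~\ref{thrm:sigp-optimal} to locate the joint optimum that such planning misses. First I would translate Definition~\ref{defi:decoupled-planning} into the probabilistic language of Section~\ref{sec:prelims}. A decoupled cost $C(\bfr,\bff)=C_R(\bfr)C_{\bff}(\bff)$ has the property that $\argmin_{\bfr,\bff}C(\bfr,\bff)$ separates, so the robot's optimal trajectory does not depend on $\bff$; equivalently the planner treats $p(\bfr\mid\bfz^R_{1:t},\bff)=\probot$. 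Comparing this with the factorization $p(\bfr\mid\bfz^R_{1:t},\bff)=\interaction\probot$ of Equation~\ref{eq:decompose} forces the interaction function to be the constant $\interaction=1$.

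Next I would apply the converse half of Lemma~\ref{lemma:bdi-unique}. The constant $1$ does not have $\{0\}$-support, so by the lemma it both misrepresents the flexibility encoded in $\probot$ and $\pf$ and imposes no collision avoidance---the joint density never vanishes on colliding trajectories. Concretely, setting $\interaction=1$ removes every $\bLambda^{R,\bff^i}$ factor from Equation~\ref{eq:sigp-n}, so the independent planner maximizes $\sum_{\eta=1}^N[\bs{w}]_\eta\,[\calN_{\bfr}\calN_{\bff^1}\cdots\calN_{\bff^{\nt}}]_\eta$ in place of $\sum_{\eta=1}^N[\bLambda\bs{w}]_\eta\,[\calN_{\bfr}\calN_{\bff^1}\cdots\calN_{\bff^{\nt}}]_\eta$. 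Its selected basis is thus $\hat\eta=\argmax_\eta[\bs{w}]_\eta$, the most likely individual intentions, chosen with no regard for overlap.

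Finally I would compare against Theorem~\ref{thrm:sigp-optimal}, which places the true joint optimum at $\eta^*=\argmax_\eta[\bLambda\bs{w}]_\eta$. In the regime where the independently most-likely modes collide---precisely where $\bLambda_{\hat\eta}$ is small---we have $[\bLambda\bs{w}]_{\hat\eta}<[\bLambda\bs{w}]_{\eta^*}$ and $\hat\eta\neq\eta^*$, so the decoupled solution selects a dominated basis. This yields suboptimality with respect to collision avoidance directly; and since the planner discards exactly the $\bLambda$-weighting that lets $\sigp$ trade intent and flexibility against overlap, suboptimality with respect to intent and flexibility follows from the same comparison.

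I expect the main obstacle to be the first step: justifying the identification of a \emph{product} cost $C_R(\bfr)C_{\bff}(\bff)$ with the constant interaction $\interaction=1$. I would avoid any Boltzmann-style cost-to-probability map (which would send a product cost to a non-factoring density and muddy the argument) and instead reason directly from \emph{separability of the minimizer}: the independence of $\argmin_{\bfr}$ from $\bff$ is exactly the conditional-independence statement $p(\bfr\mid\bfz^R_{1:t},\bff)=\probot$, and Equation~\ref{eq:decompose} then pins $\interaction=1$ with no further assumptions.
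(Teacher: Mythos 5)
Your argument is correct and lands on the same essential point as the paper's proof: the coefficient $[\bLambda\bs{w}]_\eta$ from Theorem~\ref{thrm:sigp-optimal} is what balances collision avoidance against intent and flexibility, and a product cost $C_R(\bfr)C_{\bff}(\bff)$ has no access to the coupling factors $\bLambda^{R,\bff^i}$, so it cannot perform that balancing. The paper stops essentially there, with one sentence of justification plus a pointer to Figure~\ref{fig:vd-fail}. You go further in two ways the paper does not: (i) you route the decoupling assumption through Equation~\ref{eq:decompose} to identify it with $\interaction=1$ and then invoke the converse half of Lemma~\ref{lemma:bdi-unique}, and (ii) you exhibit the suboptimality concretely as $\hat\eta=\argmax_\eta[\bs{w}]_\eta\neq\argmax_\eta[\bLambda\bs{w}]_\eta=\eta^*$ whenever the independently most likely modes overlap. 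Point (ii) is a genuine sharpening: it turns the paper's qualitative ``cannot be balanced'' into the checkable inequality $[\bLambda\bs{w}]_{\hat\eta}<[\bLambda\bs{w}]_{\eta^*}$ and correctly isolates the failure regime (colliding most-likely modes) that Figure~\ref{fig:vd-fail} only illustrates. One caveat on point (i): your claim that separability of the minimizer is ``exactly'' the conditional-independence statement $p(\bfr\mid\bfz^R_{1:t},\bff)=\probot$ overstates the equivalence---many non-constant choices of $\interaction$ (for example any positive separable $\psi$) still yield a $\bff$-independent $\argmin_{\bfr}$---so Equation~\ref{eq:decompose} does not pin $\interaction=1$ from that premise alone. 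This does not damage the proof, because the only direction you need (a decoupled cost means the planner optimizes with every $\bLambda^{R,\bff^i}$ stripped from Equation~\ref{eq:sigp-n}) follows from Definition~\ref{defi:decoupled-planning} directly; you could drop the Lemma~\ref{lemma:bdi-unique} detour entirely and keep only the argmax comparison, which is the part that does the work.
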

\begin{proof}
Recall that independent planning methods assume $C(\bfr,\bff) = C_R(\bfr)C_{\bff}(\bff)$.  But the discussion of $\balpha_\eta = [\bLambda^{R,\bff^1}\cdots \bLambda^{R,\bff^{\nt}} w^R w^{\bff^1}\cdots w^{\bff^{\nt}}]_{\eta}$ in Theorem~\ref{thrm:sigp-optimal} showed how the coupling term $\balpha_{\eta}$ balances collision avoidance against intent and flexibility.  If $C(\bfr,\bff) = C_R(\bfr)C_{\bff}(\bff)$ then these competing objectives cannot be balanced.  See Figure~\ref{fig:vd-fail}.
\end{proof}

The practical ramification of independent modeling is efficiency suboptimal (overcautious) or safety suboptimal (overaggressive) behavior.  This suboptimality was described in~\cite{trautmaniros} and empirically observed in~\cite{trautmanicra2013} (the \emph{freezing robot problem}).  

By assuming that the human is independent of the robot (\emph{predict-then-act}), the \emph{predicted} collision probability is much larger than the \emph{true} collision probability. Without damping the cost function, this leads to overcautious behavior.  Alternatively, damping the cost function leads to robot-human miscalibration: in congestion, the only way that a robot can move safely and efficiently is by leveraging human cooperation~\cite{trautmaniros}.  

\begin{lemma} 
The \textbf{\emph{convex lane approach}} of Definition~\ref{defi:convex-lane} is a special case of $\sigp$.
\end{lemma}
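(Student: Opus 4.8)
The plan is to exhibit an explicit specialization of multi-agent $\sigp$ (Equation~\ref{eq:sigp-n}) that reproduces the convex lane mixture of Definition~\ref{defi:convex-lane} term for term. The backbone of the argument is the observation already used in the proof of Theorem~\ref{thrm:sigp-optimal}: the collision-avoidance coefficients $\bLambda$ constrain each GP basis element $[\calN_{\bfr}\calN_{\bff^1}\cdots\calN_{\bff^{\nt}}]_\eta$ to a single convex region of $\calX$ possessing a single optimum, which is exactly the definition of a convex lane $\psi_i$. So the first step is to build the correspondence $\psi_i \leftrightarrow \eta$: each index $\eta$ (a choice of ``left''/``right'' GP for the robot and for each of the $\nt$ agents, in the notation of Lemma~\ref{lemma:sparsity}) names a convex lane, and in the sparse regime the $m$ lanes of the partition are precisely the $\mathcal O(2^{\nt})$ non-overlapping combinations $\eta$.

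Second, I would match the weights. Setting $w^{\psi_i}=[\bLambda\bs w]_\eta$ identifies the lane probability with the $\sigp$ coefficient. The crucial distinction is that in Definition~\ref{defi:convex-lane} the weights $w^{\psi_i}=P(\psi_i)$ are computed \emph{statically}, from the current pedestrian positions alone, whereas $[\bLambda\bs w]_\eta$ is produced by the interaction operator $\pnot$ acting on the full trajectory \emph{distributions}. I would therefore specialize $\sigp$ by freezing $\pnot$ at the current time slice---evaluating each $\Lambdalk$ on the instantaneous positions rather than on the co-evolving GPs. Under this restriction the dynamic coefficient collapses to the static partition weight, giving $[\bLambda\bs w]_\eta = w^{\psi_i}$.

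Third, I would match the within-lane distribution. The convex lane approach represents $p_{\psi_i}(\bfr,\bff)$ by the sampled sum $\sum_{k\in\psi_i}\delta([\bfr,\bff]-[\bfr,\bff]_k)$, i.e. it applies SBMP (Definition~\ref{defi:sbmp}) inside each lane. In $\sigp$ the corresponding object is the continuous basis element $[\calN_{\bfr}\calN_{\bff^1}\cdots\calN_{\bff^{\nt}}]_\eta$, supported on that same convex region. Replacing each GP basis by its SBMP delta-approximation---legitimate because, per Definition~\ref{defi:sbmp}, uniform sampling of a distribution is a sampled approximation of it---sends $[\calN_{\bfr}\calN_{\bff^1}\cdots\calN_{\bff^{\nt}}]_\eta \mapsto \sum_{k\in\psi_i}\delta([\bfr,\bff]-[\bfr,\bff]_k)$ and turns Equation~\ref{eq:sigp-n} into the double sum $\sum_{\psi_i}\sum_{k\in\psi_i} w^{\psi_i}\delta(\cdot)$ of Definition~\ref{defi:convex-lane}.

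Concluding that the convex lane approach is the image of $\sigp$ under these two specializations (freeze $\pnot$ at current positions; sample each GP basis) establishes that it is a special case. The main obstacle I anticipate is the weight-matching step: I must argue carefully that a static partition built from instantaneous positions is exactly the degenerate instance of the dynamic operator $\pnot$, and that the partition $\{\psi_i\}_{i=1}^m$ is in bijection with the non-trivial $\eta$ so no lane is double-counted or omitted. The within-lane sampling step is comparatively routine given Definition~\ref{defi:sbmp}; the force of the lemma, and the reason the reduction is interesting, is that the specialization discards precisely the co-evolution information whose loss Figure~\ref{fig:vd-fail} identifies as the source of convex-lane safety failures.
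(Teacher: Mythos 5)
Your proposal is correct and takes essentially the same route as the paper: the central move in both is to freeze the interaction coefficients $\bs{\Lambda}^{R,\bfi}_{\ell,k_i}$ at the current time $t$ (evaluating means and covariances on instantaneous positions rather than on the co-evolving trajectory distributions), which collapses the dynamic operator $\pnot$ to the static partition of Definition~\ref{defi:convex-lane}. Your additional steps---the explicit $\psi_i\leftrightarrow\eta$ bijection and the within-lane SBMP delta-approximation of each GP basis element---are elaborations the paper leaves implicit, and they make the term-for-term match more complete than the paper's own two-sentence argument.
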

\begin{proof}
If we restrict convex lane identification to time $t$---that is, we restrict the means and covariances to be evaluated at $\bmurl(t), \bmufiki(t)$ and $\bSigmarl(t), \bSigmafiki(t)$ in $\bs{\Lambda}^{R,\bfi}_{\ell,k_i}$---then we recover the convex lane approach.  Thus, in Equation~\ref{eq:sigp-n}, if we choose $\bs{\Lambda}^{R,\bfi}_{\ell,k_i} = \bs{\Lambda}^{R,\bfi}_{\ell,k_i}(t)$ the convex lane approach is recovered.
\end{proof}

\begin{coro}
Convex lane approaches are suboptimal with respect to collision avoidance, intent and flexibility of the robot and human agents.
\end{coro}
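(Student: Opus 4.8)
The plan is to leverage the preceding Lemma---that the convex lane approach is the special case of $\sigp$ obtained by freezing the collision coefficients at the current time, $\bs{\Lambda}^{R,\bfi}_{\ell,k_i}=\bs{\Lambda}^{R,\bfi}_{\ell,k_i}(t)$---and then to show that this restriction destroys precisely the full-horizon balance that Theorem~\ref{thrm:sigp-optimal} requires for optimality. In effect I will reduce the convex lane case to the decoupling already penalized in Corollary~\ref{coro:predict-then-act}: freezing $\bLambda$ at time $t$ severs the collision term from the future co-evolution of $\probot$ and $\pf$, so the competing objectives can no longer be balanced over the look-ahead $[1,T]$.

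First I would recall from Definition~\ref{defi:overlap} that the full coefficient $\bs{\Lambda}^{R,\bfi}_{\ell,k_i}=P(\neg\kappa)$ is an overlap of two $T$-dimensional trajectory Gaussians (the factor $(2\pi)^{-T/2}$ in $\wlk$ records this), so it penalizes intersection of $\normalrobotshort$ and $\normaleyeshorti$ anywhere along the horizon. This is exactly the property Theorem~\ref{thrm:sigp-optimal} invokes when it states that the collision avoidance coefficients constrain a single GP to a convex lane: the lane is carved out over all of $[1,T]$. Next I would observe that the convex-lane restriction replaces this $T$-dimensional overlap with the single-time overlap $\bs{\Lambda}^{R,\bfi}_{\ell,k_i}(t)$ built on current pedestrian positions (Definition~\ref{defi:convex-lane}). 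Since this coefficient no longer depends on the future flexibilities $\bSigmarl(\tau),\bSigmafiki(\tau)$ for $\tau>t$, it enters Equation~\ref{eq:sigp-n} as a constant multiplier, and the index $\eta^*$ it selects need not coincide with the joint maximizer of the full $\sigp$ from Theorem~\ref{thrm:sigp-optimal}. Concretely, a lane that is collision-free at time $t$ may be occupied later---humans moving into what was free space---so the frozen coefficient can rank an unsafe basis as optimal; Figure~\ref{fig:vd-fail} exhibits exactly this witness of strict suboptimality.

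The main obstacle is making rigorous why the time-$t$ partition selects a strictly suboptimal $\eta^*$ rather than a different-but-equivalent one. I would handle this by showing that freezing $\bLambda$ is structurally an independence-in-time assumption: because $\bs{\Lambda}^{R,\bfi}_{\ell,k_i}(t)$ is constant with respect to the future evolution of the trajectory distributions, the collision term decouples from intent and flexibility over $\tau>t$, which is the same factorized structure $C(\bfr,\bff)=C_R(\bfr)C_{\bff}(\bff)$ penalized in Corollary~\ref{coro:predict-then-act}. Hence the competing objectives---collision avoidance over the horizon versus intent and flexibility---cannot be simultaneously balanced, so by the arguments of Theorem~\ref{thrm:sigp-optimal} and Corollary~\ref{coro:predict-then-act} the convex lane maximizer is suboptimal, with Figure~\ref{fig:vd-fail} furnishing the concrete counterexample.
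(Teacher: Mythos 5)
Your proposal is correct and follows essentially the same route as the paper: the paper's proof is a one-line observation that convex lane approaches treat pedestrians as static, so collision avoidance errors occur (witnessed by Figure~\ref{fig:vd-fail}), making the method multi-objective suboptimal. Your additional reduction of the frozen $\bLambda(t)$ coefficient to the human--robot independence penalized in Corollary~\ref{coro:predict-then-act} is not in the paper's proof itself, but it is exactly the discussion the paper places immediately after the corollary, so you have simply folded the paper's own commentary into a more detailed version of the same argument.
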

\begin{proof}
Because convex lane approaches consider pedestrians as static, collision avoidance errors occur (Figure~\ref{fig:vd-fail}); the method is thus multi-objective suboptimal.
\end{proof}

By Corollary~\ref{coro:predict-then-act}, independence assumptions between the human and robot in planning leads to suboptimality.  But the convex lane approach assumes that the human, at time $t$, is not responding to the robot's movements at the ``lane'' level.  Thus, convex lane approaches assume a human-robot independence.

\begin{lemma} $\sigp$ is performance lower bounded by the \emph{convex lane approach} of Definition~\ref{defi:convex-lane}.
\end{lemma}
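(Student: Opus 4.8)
The plan is to leverage the preceding lemma---that the convex lane approach is a special case of $\sigp$---so that both methods are viewed as optimizing a single mixture over the common index set $\eta\in\{1,\ldots,\nr\prod_{i=1}^{\nt}\neye\}$ of Equation~\ref{eq:sigp-n}, differing only in how the collision coefficients $\bLambda^{R,\bfi}_{\ell,k_i}$ are evaluated. First I would fix a common performance metric: the true, full-horizon joint value of a basis element $[\calN_{\bfr}\calN_{\bff^1}\cdots\calN_{\bff^{\nt}}]_\eta$ is its coefficient $[\bLambda\bs{w}]_\eta$ computed with the full-horizon $P(\neg\kappa)$ factors (Definition~\ref{defi:pnot}), since by Theorem~\ref{thrm:sigp-optimal} this coefficient is exactly the quantity that balances collision avoidance against intent and flexibility of the robot and the $\nt$ agents.

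Next I would identify each method's selected strategy as an index into this common mixture. By the previous lemma, the convex lane approach is recovered by replacing $\bLambda^{R,\bfi}_{\ell,k_i}$ with its time-$t$ restriction $\bLambda^{R,\bfi}_{\ell,k_i}(t)$; hence convex lane selects $\eta_{CL}=\argmax_\eta[\bLambda(t)\bs{w}]_\eta$, whereas $\sigp$ selects $\eta^*=\argmax_\eta[\bLambda\bs{w}]_\eta$ per Definition~\ref{defi:sigp-n}. The crucial observation is that the convex lane choice $\eta_{CL}$ is itself a valid index of the $\sigp$ mixture, so its true performance $[\bLambda\bs{w}]_{\eta_{CL}}$ is one of the candidates over which $\sigp$'s maximization ranges.

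The conclusion then follows from the definition of the maximum: since $\eta^*$ maximizes the true coefficient over all $\eta$,
\begin{align*}
[\bLambda\bs{w}]_{\eta^*} = \max_\eta [\bLambda\bs{w}]_\eta \geq [\bLambda\bs{w}]_{\eta_{CL}},
\end{align*}
which is precisely the statement that $\sigp$ is performance lower bounded by the convex lane approach. I expect the main obstacle to be the setup rather than the final inequality: one must argue carefully that the correct, horizon-aware performance metric is $[\bLambda\bs{w}]_\eta$ and \emph{not} the restricted convex lane proxy $[\bLambda(t)\bs{w}]_\eta$, because evaluating both methods under convex lane's own time-$t$ proxy would trivially favor convex lane and invert the bound. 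This justification rests on Theorem~\ref{thrm:sigp-optimal} together with the earlier corollary establishing that the time-$t$ restriction discards look-ahead information and is therefore multi-objective suboptimal; once the full-horizon coefficient is accepted as the performance metric, the lower bound is immediate.
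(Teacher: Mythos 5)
Your proof is correct in substance but takes a genuinely different route from the paper. The paper argues by a case split on the environment: if the human $\bfi$ is static, then $\bLambda^{R,\bfi}_{\ell,k_i}=\bLambda^{R,\bfi}_{\ell,k_i}(t)$ and $\sigp$ literally coincides with the convex lane approach (so performance is equal); if the human is moving, it appeals to Theorem~\ref{thrm:sigp-optimal} for $\sigp$'s optimality and to Figure~\ref{fig:vd-fail} for a concrete failure of the convex lane method. Your argument instead dispenses with the case analysis entirely: you observe that both methods select an index from the \emph{same} mixture of Equation~\ref{eq:sigp-n}, that $\sigp$ maximizes the true full-horizon coefficient $[\bLambda\bs{w}]_\eta$ while convex lane maximizes the time-$t$ proxy $[\bLambda(t)\bs{w}]_\eta$, and that therefore $[\bLambda\bs{w}]_{\eta^*}\geq[\bLambda\bs{w}]_{\eta_{CL}}$ by definition of the maximum. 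This is cleaner and strictly more informative than the paper's version: it yields the bound uniformly (the paper's static case falls out as the instance where $\bLambda=\bLambda(t)$ and the bound is tight), it does not lean on a figure as evidence, and it isolates the one genuinely load-bearing modeling decision---that performance must be scored by the full-horizon coefficient rather than the convex lane proxy---which you correctly flag and justify via Theorem~\ref{thrm:sigp-optimal} and the suboptimality corollary. What the paper's version buys in exchange is an explicit illustration of \emph{when} the inequality is strict (a moving pedestrian), which your argument establishes only implicitly through the strictness of the argmax mismatch.
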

\begin{proof}
Suppose that $\bfi$ is static.  Then $\normaleyeshort$ is centered at $\bfz^{\bfi}_t,$ $\bs{\Lambda}^{R,\bfi}_{\ell,k_i}= \bs{\Lambda}^{R,\bfi}_{\ell,k_i}(t)$ and the convex lane approach is recovered.

Suppose that $\bfi$ is moving.  Then $\sigp$ optimizes collision avoidance, intent and flexibility; Figure~\ref{fig:vd-fail} illustrates how the convex lane approach can fail while $\sigp$ is optimal.
\end{proof}

\section{Evaluation}
\begin{table}[t]
\label{table:findings}
\begin{center}
 \begin{tabular}{|c c c c c|} 
 \hline
Run & Safety (m) & Speed (m/s) & Run-time (s) & $\#$ Samples  \\ [0.5ex] 
 \hline
 1 & 0.22 & 1.2 & 0.451& 500  \\ 
 \hline
2 & 0.11 &1.1 & 0.47&500\\
 \hline
 3 & 0.24 & 1.1& 0.42 &500\\
 \hline
 4 & 0.32 & 1.3 & 0.45&500\\
 \hline
 5 & 0.03 & 1.0 & 0.4 &500\\
 \hline
 6 & 0.07 & 1.0 & 0.46&500\\
 \hline
 7 & 0.3 & 0.9 & 0.45&500\\
 \hline
 8 & 0.12 & 1.2 & 0.44&500\\
 \hline
 9 & 0.1 & 1.1 & 0.46&500\\
 \hline
 10 & 0.2 & 1.0 & 0.41&500\\
 \hline
 11 & 0.15 & 1.3 & 0.43&500\\
 \hline
 12 & 0.11 & 0.8 & 0.45&500\\ 
 \hline
 Average & 0.16m & 1.08m/s & 0.44s&500\\
 \hline
\end{tabular}
\caption{Experimental results  with $\nt=14$ with 8 interactive pedestrians. Safety is closest distance robot got to a pedestrian; speed is the average velocity of the run; run-time is average computation time; $\#$ samples is number of GP samples. Note: $\mathcal O(2^{\nt^{active}})=\mathcal O(256)=500.$} 
\end{center}
\vspace{-20pt}
\end{table}
\noindent Following the approach described in Section~\ref{sec:inference}, we empirically examine the $\sigp$ approach, using data from~\cite{trautmanicra2013}.  In particular, we examine the computation, safety, robot speed and number of samples required for the scenario in Figure~\ref{fig:experiment-pic}, and present our results in Table I.  In this scenario, $\nt=14$ pedestrians are present, and all are computed over.  Notably, about 6 of the pedestrians are near the wall and do not leave their position during the run; they serve primarily as a check on $\sigp$'s ability to compute over large numbers of agents and manage the uncertainty explosion with large numbers of agents.  Critically, then, about 8 people interacted with the robot, so $\nt^{active} \approx 2^8$; thus complexity was $\mathcal O(2^{\nt^{active}}) = \mathcal O(256).$  Since 500 samples were used, this provides empirical validation of Lemma~\ref{lemma:sparsity}.  Further, the left pane of Figure~\ref{fig:experiment-pic} shows that the robot is minimally disruptive while avoiding collisions, providing empirical validation for Theorem~\ref{thrm:sigp-optimal}.
We conducted runs in 12 directions (left pane, Figure~\ref{fig:experiment-pic}).  The right pane of Figure~\ref{fig:experiment-pic} provides a snapshot of $\sigp$ in mid run.  Note how $\sigp$ is able to weave smoothly through highly dense traffic with real time computation.  Performance for the other directions showed similar characteristics.  Average human walking speed is 1.3 m/s, and people often brushed shoulders.

\bibliographystyle{abbrv}
{\footnotesize
\bibliography{sparse-igp}
}

\end{document}